\documentclass[12pt]{article} 

\usepackage{amsmath,amsfonts,bm}

\def\eqref#1{equation~\ref{#1}}

\def\1{\bm{1}}

\def\rvx{{\mathbf{x}}}

\def\rvz{{\mathbf{z}}}

\def\vmu{{\bm{\mu}}}
\def\vtheta{{\bm{\theta}}}

\def\ve{{\bm{e}}}

\def\vu{{\bm{u}}}
\def\vv{{\bm{v}}}

\def\vx{{\bm{x}}}
\def\vy{{\bm{y}}}
\def\vz{{\bm{z}}}

\def\mI{{\bm{I}}}

\def\mL{{\bm{L}}}

\def\mSigma{{\bm{\Sigma}}}

\DeclareMathAlphabet{\mathsfit}{\encodingdefault}{\sfdefault}{m}{sl}
\SetMathAlphabet{\mathsfit}{bold}{\encodingdefault}{\sfdefault}{bx}{n}

\def\sB{{\mathbb{B}}}

\newcommand{\E}{\mathbb{E}}

\newcommand{\R}{\mathbb{R}}

\usepackage[letterpaper,left=1.0in,right=1.0in,top=1.0in,bottom=1.0in]{geometry}
\usepackage{hyperref}
\usepackage{url}
\usepackage{wrapfig}
\usepackage{float}
\usepackage{multirow} 
\usepackage{subcaption}
\usepackage{amsthm}
\newtheorem{theorem}{Theorem}
\usepackage{graphicx}
\usepackage{booktabs}
\usepackage{tabularx}
\usepackage{array}
\usepackage{natbib} 

\title{BayesNDE: Bayesian Generative Modeling for Neural Density Estimation}

\author{
Chenglin Li \qquad Qiao Liu\thanks{Corresponding author: qiao.liu@yale.edu} \\[2mm]
Department of Biostatistics, Yale University \\
New Haven, Connecticut, USA
}

\begin{document}

\maketitle

\begin{abstract}
Density estimation is a fundamental problem in statistics and machine learning. In this work, we introduce BayesNDE, a neural density estimator based on Bayesian generative modeling. BayesNDE learns a Bayesian generative model and evaluates its density without requiring invertible networks or Jacobian-determinant computation. For each observation, it infers a sample-specific latent posterior to construct an adaptive proposal that focuses computation on regions contributing most to its density. Bridge sampling then combines samples from this proposal with separate posterior samples to estimate the density. Experiments on nonlinear and multimodal synthetic datasets show improved estimation of density values and better recovery of the density structure compared to the state-of-the-art neural density estimators. Applications to real-world datasets further demonstrate improved anomaly detection. Together, these results highlight BayesNDE as a flexible and effective neural density estimator, demonstrating how posterior inference can turn generative models into tools for density estimation. The code and tutorials are available at \url{https://github.com/liuq-lab/BayesNDE}.

\end{abstract}
\section{Introduction}
\label{sec:introduction}

Density estimation is a fundamental problem in statistics and machine learning. Given observations $\{x_i\}_{i=1}^N$ drawn from an unknown distribution, 
the goal is to learn the underlying distribution $p(x)$ and evaluate the density value at a new observation. Traditional density estimators such as kernel-based methods \citep{parzen1962estimation} typically perform well only in low-dimensional settings. Density estimation has become increasingly challenging as modern data exhibit complex patterns, such as nonlinearity and high-dimensionality.

Deep neural networks provide powerful tools for modeling complex distributions, offering new advances to density estimation. There are two major types of neural density estimators: autoregressive models and normalizing flows. Autoregressive models obtain tractable likelihoods through a sequential factorization of the joint density \citep{germain2015made,papamakarios2017masked}, while normalizing flows construct invertible transformations whose Jacobian determinants can be efficiently evaluated \citep{dinh2017density,chen2018neural,chen2019residual}. 

Despite their different formulations, both autoregressive density
estimators and normalizing flows can be viewed within a general
change-of-variables framework~\citep{kingma2016improved}. Specifically, let $z \sim p_Z(z)$ and $x = G(z)$, where 
$G:\mathbb{R}^p \rightarrow \mathbb{R}^p$ is differentiable and invertible. 
The density of $x$ follows,
\begin{equation}
    p_X(x)
    =
    p_Z(z)
    \left|
    \det
    \frac{\partial G(z)}{\partial z^\top}
    \right|^{-1}.
    \label{eq:change_of_variables}
\end{equation}
Within this change-of-variables formulation, exact density evaluation requires both invertible transformation and tractable Jacobian computation. It requires the latent and observed spaces to have the same dimension and constrains the neural architecture, which may potentially limit the expressiveness of the neural architecture.
 
Latent-variable generative models provide an alternative route to
flexible distribution modeling. Given a latent variable
$\vz$ with prior $p_Z(\vz)$ and the model
$p_{\vtheta}(\vx\mid\vz)$, the marginal density is
\begin{equation}
p_{\vtheta}(\vx)
=
\int p_{\vtheta}(\vx\mid\vz)p_Z(\vz)\,d\vz.
\label{eq:marginal_density}
\end{equation}
The challenge is therefore shifted from Jacobian computation to
integration over the latent space, which is generally intractable. Following this direction,
\citet{liu2021density} proposed Roundtrip, which models observations
around a learned low-dimensional manifold and evaluates pointwise
densities using importance sampling or Laplacian approximation.

Although Roundtrip substantially relaxes the architectural constraints
of flow-based density estimators, its importance proposal is centered
at the encoder output, which provides a deterministic latent
representation rather than a distributional characterization of the observation-specific posterior.
The resulting proposal may therefore not fully capture posterior
uncertainty, dependence, or multimodality, potentially reducing
importance-sampling efficiency when proposal-posterior mismatch
is substantial.

To overcome these limitations, we introduce BayesNDE, a neural
density estimator based on Bayesian generative modeling. To our knowledge, BayesNDE is the first neural density-estimation framework to use sample-specific posterior-adaptive bridge sampling
for pointwise density evaluation. We summarize our contributions as follows:

\begin{itemize}
    \item We introduce a new perspective for explicit neural density evaluation by estimating the marginal or conditional density of each observation through posterior-adaptive bridge sampling. This removes the need for dimension-preserving, invertible transformations and tractable Jacobian determinants.

    \item BayesNDE uses observation-specific posterior samples to construct adaptive proposals and bridge sampling to estimate the resulting normalizing constants. We establish consistency of the estimator and characterize how posterior-proposal overlap determines its asymptotic efficiency.

    \item BayesNDE achieves superior performance in a diverse of density estimation benchmarks, including nonlinear and multimodal simulation datasets, and real-world datasets. The downstream applications, such as outlier detection and Bayesian classification, further support the significance of our method.
\end{itemize}

\section{Related Work}
\label{sec:related_work}

\paragraph{Neural density estimation.}

Modern neural density estimators are largely built around model classes that admit tractable likelihood evaluation. Autoregressive models, such as MADE \citep{germain2015made}, represent a joint density through sequential conditional factorizations, while normalizing flows construct invertible transformations with tractable change-of-variables computation, including MAF \citep{papamakarios2017masked}, Real NVP \citep{dinh2017density}, and Residual Flow (Resflow) \citep{chen2019residual}.
Continuous normalizing flows extend this framework to continuous-time transformations \citep{chen2018neural}, while Flow Matching provides an efficient approach for learning the associated vector fields \citep{lipman2023flow}. Relatedly, \citet{wu2017on} used annealed importance sampling to evaluate test log-likelihoods of pretrained decoder-based generative models. Their focus is quantitative evaluation of fitted generative models rather than the development and benchmarking of a general-purpose neural density estimator.

\paragraph{Normalizing-constant estimation.}
Estimating normalizing constants is a classical problem in Bayesian computation. Importance sampling estimates an unknown normalizing constant using weighted samples from a proposal distribution, with efficiency depending critically on proposal-target overlap. Bridge sampling estimates the normalizing constant by combining samples from both a proposal distribution and the normalized target distribution \citep{meng1996simulating,gronau2020bridgesampling}.
Related adaptive importance-sampling methods further use information from posterior regions to construct more effective proposal distributions \citep{hoogerheide2012adaptive}. \citep{jia2020normalizing} adopt bridge sampling with a normalizing-flow-based proposal for normalizing-constant estimation of Bayesian evidence. These methods, however, are designed for generic normalizing-constant estimation rather than neural density estimation of the observed-data
distribution.

\paragraph{Positioning of BayesNDE.}
BayesNDE bridges the gap between neural density estimation and Bayesian normalizing-constant estimation through observation-specific posterior inference. Its generative model is inspired by recent advances in Bayesian generative modeling~\citep{liu2026bayesian,liu2026ai,luo2026bgm,liu2026missingness}, while our focus is on explicit density estimation through posterior-adaptive bridge sampling. BayesNDE exploits this posterior to guide latent-space integration, providing a new perspective on neural density estimation.

\section{Method}
\label{headings}

\subsection{Method overview}
\label{sec:overview}

BayesNDE proceeds in two stages
(Figure~\ref{fig:bayesnde_overview}).
First, we train a Bayesian generative model
to capture the underlying data distribution. Second, with the fitted model, we estimate the density of a new observation as the normalizing constant of its latent posterior. Posterior samples guide the construction of an observation-adaptive proposal that concentrates computation in latent regions contributing most to the marginal-density integral. Bridge sampling then combines proposal draws with held-out posterior
samples to estimate this normalizing constant and obtain the
pointwise density.

\begin{figure}[H]
    \centering
    \includegraphics[width=\linewidth]{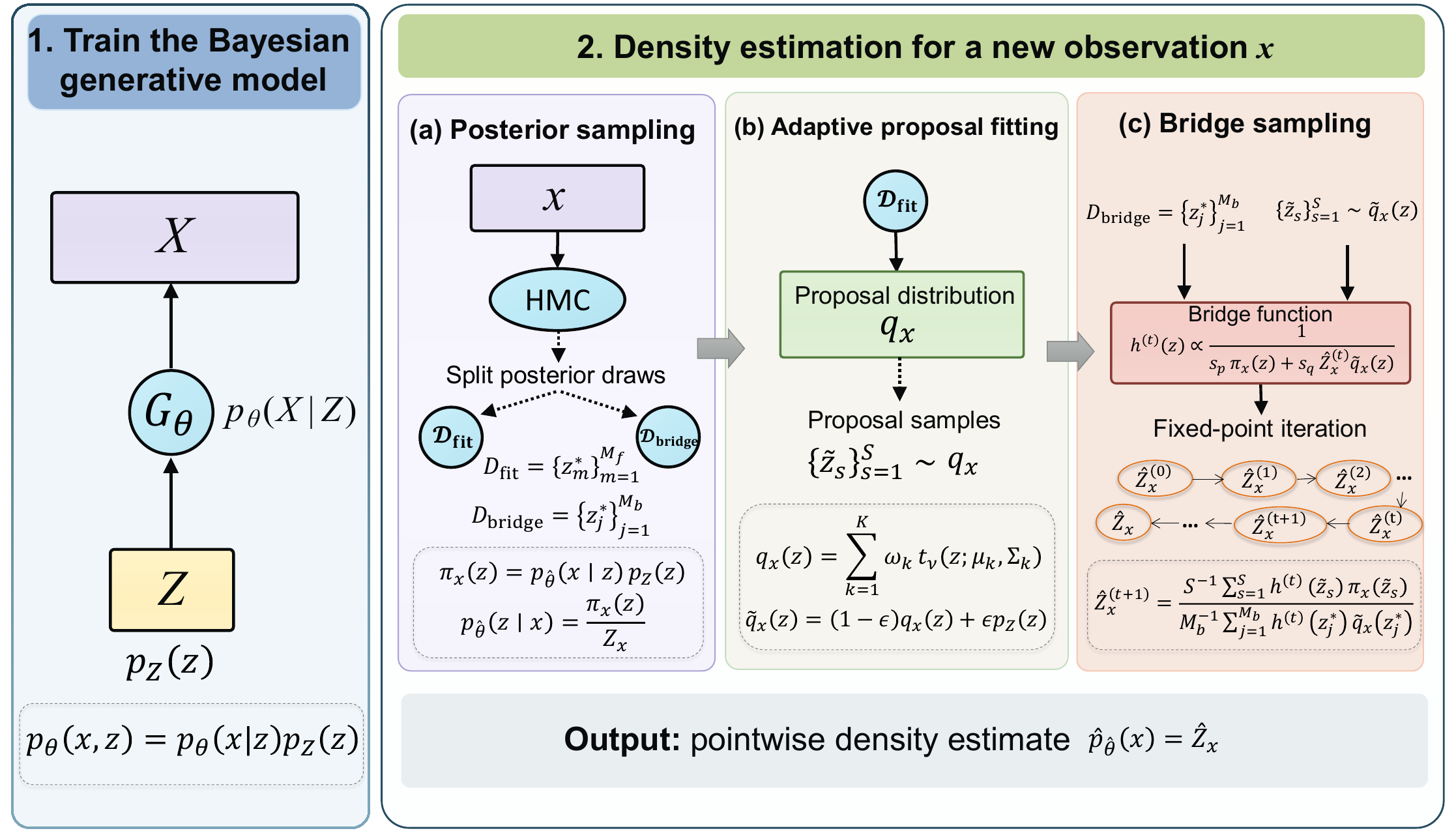}
    \caption{
    Overview of BayesNDE.
    }
    \label{fig:bayesnde_overview}
\end{figure}

Specifically, for a fitted parameter $\hat{\vtheta}$ and an observation $\vx$, define the unnormalized latent posterior as 
    $\pi_{\vx}(\vz)
    =
    p_{\hat{\vtheta}}(\vx\mid\vz)p_Z(\vz)$.
Its normalizing constant is exactly the desired marginal density,
\begin{equation}
    Z_{\vx}
    :=
    \int \pi_{\vx}(\vz)\,d\vz
    =
    p_{\hat{\vtheta}}(\vx),
    \qquad
    p_{\hat{\vtheta}}(\vz\mid\vx)
    =
    \frac{\pi_{\vx}(\vz)}{Z_{\vx}}.
    \label{eq:normalizing_constant}
\end{equation}
Thus, pointwise density evaluation reduces to estimating $Z_{\vx}$. Posterior inference requires only the unnormalized posterior $\pi_{\vx}(\vz)$, allowing posterior samples to identify the latent regions that contribute most to the marginal-density integral. BayesNDE uses these samples to construct an observation-specific proposal
and estimates $Z_{\vx}$ by bridge sampling.

\subsection{Bayesian Generative Model}
\label{sec:bgm}

Let $\rvx \in \R^p$ denote the observed random vector and $\rvz \in \R^d$ a low-dimensional latent random vector, where typically $d < p$. We assume a standard Gaussian prior $p_Z(\vz)=\mathcal{N}(\vz;\mathbf{0},\mI_d)$ for the latent variable. For continuous data, the conditional distribution of $\rvx$ given $\rvz$ is modeled as 
\begin{equation} p_{\vtheta}(\vx \mid \vz) = \mathcal{N} \left( \vx; \vmu_{\vtheta}(\vz), \mSigma_{\vtheta}(\vz) \right), \label{eq:conditional_density} \end{equation}
where $\vtheta$ denotes the parameters of the generative model. Both the conditional mean $\vmu_{\vtheta}(\vz)$ and covariance matrix $\mSigma_{\vtheta}(\vz)$ are represented by a neural network with two output heads. For simplicity, we use the diagonal structure
$\mSigma_{\vtheta}(\vz)
=
\operatorname{diag}
\left(
\sigma_1^2(\vz;\vtheta),\ldots,\sigma_p^2(\vz;\vtheta)
\right)$. Importantly, the Gaussian assumption is imposed conditionally on $\rvz$; a nonlinear marginal distribution of $\vx$ is induced after integrating over the latent variable.

During model fitting, we first use an EGM warm start~\citep{liu2024encoding,liu2026bayesian} to initialize the generative parameters and sample-specific latent representations (see details in Appendix~\ref{app:implementation}).
Then training proceeds by iteratively updating the
sample-specific latent variables $\{\vz_i\}_{i=1}^{N}$ and the shared
generative parameters $\vtheta$. Given the current $\vtheta$, the latent variable associated with observation
$\vx_i$ is updated according to its log-posterior $\log p_{\vtheta}(\vz_i \mid \vx_i)
=
\log p_Z(\vz_i)
+
\log p_{\vtheta}(\vx_i \mid \vz_i)
+
C_i,$
where $C_i$ does not depend on $\vz_i$. Since the right-hand side is differentiable with respect to $\vz_i$,
each latent variable is updated by stochastic gradient ascent. These updates
are fully decoupled across observations and can therefore be performed
in parallel to improve efficiency.

Conditional on the current latent variables, the generative parameters are updated using the mini-batch conditional log-likelihood
$
l_{\mathcal{B}}(\vtheta)
=
\frac{1}{|\mathcal{B}|}
\sum_{i \in \mathcal{B}}
\log p_{\vtheta}(\vx_i \mid \vz_i),
$ together with the regularization terms specified in Appendix~\ref{app:implementation}. The two updates are alternated until convergence. The fitted generative parameters
$\hat{\vtheta}$ are then fixed for subsequent pointwise density evaluation.

\subsection{Posterior Sampling and Proposal Construction}

For a new observation $\vx$, the fitted model defines the latent posterior
\begin{equation}
    p_{\hat{\vtheta}}(\vz\mid\vx)
    =
    \frac{\pi_{\vx}(\vz)}{Z_{\vx}}
    =
    \frac{
        p_{\hat{\vtheta}}(\vx\mid\vz)p_Z(\vz)
    }{
        p_{\hat{\vtheta}}(\vx)
    }.
\label{eq:latent_posterior_test}
\end{equation}
Although the normalizing constant $Z_{\vx}$ is unknown,
posterior sampling only requires evaluation of $\pi_{\vx}(\vz)$.
We therefore use Hamiltonian Monte Carlo (HMC) \citep{duane1987hybrid,neal2011mcmc} to obtain posterior draws
$\{\vz_m^{\star}\}_{m=1}^{M}$ from
$p_{\hat{\vtheta}}(\vz \mid \vx)$. To construct a proposal, the posterior draws are
randomly split into two disjoint sets, $\mathcal{D}_{\mathrm{fit}}$ and
$\mathcal{D}_{\mathrm{bridge}}$, following the
sample-splitting principle studied for bridge sampling
\citep{wong2020properties}. The former is used exclusively for proposal
construction, while the latter is retained for subsequent bridge sampling.

To capture the observation-specific posterior geometry, we first fit a $K$-component full-covariance Gaussian mixture model (GMM)
to $\mathcal{D}_{\mathrm{fit}}$ and use the fitted mixture weights, means,
and regularized covariance matrices to construct a mixture-of-Student-$t$
proposal $
q_{\vx}(\vz)
=
\sum_{k=1}^{K}
\omega_k
t_{\nu}
\left(
\vz;
\vmu_k,
\mSigma_k
\right),$
where $\nu$ denotes the degrees of freedom. The mixture structure accommodates
multimodal posterior geometry, while the heavy-tailed components improve
coverage of the posterior tails.

To improve robustness to proposal misspecification and provide additional
tail coverage, we further introduce a defensive component based on the
latent prior \citep{hesterberg1995weighted,owen2000safe},
$
\widetilde{q}_{\vx}(\vz)
=
(1-\epsilon)q_{\vx}(\vz)
+
\epsilon p_Z(\vz),
\label{eq:defensive_proposal}
$
where $\epsilon \in (0,1)$ controls the defensive weight. We then draw $S$
independent proposal samples
$\{\widetilde{\vz}_s\}_{s=1}^{S}$ from $\widetilde{q}_{\vx}$.
Together with the held-out posterior draws in $\mathcal{D}_{\mathrm{bridge}}$,
these samples are used for pointwise density estimation in the following
section.

\subsection{Bridge-Sampling Density Estimation}

We estimate the normalizing constant
$Z_{\vx}=p_{\hat{\vtheta}}(\vx)$ using bridge sampling. Using the defensive proposal $\widetilde{q}_{\vx}$, we first obtain an importance-sampling estimate
\begin{equation}
\widehat{Z}_{\mathrm{IS}}
=
\frac{1}{S}
\sum_{s=1}^{S}
\frac{
\pi_{\vx}(\widetilde{\vz}_s)
}{
\widetilde{q}_{\vx}(\widetilde{\vz}_s)
},
\qquad
\widetilde{\vz}_s \sim \widetilde{q}_{\vx}.
\label{eq:importance_sampling}
\end{equation}
This estimate is used to initialize the bridge-sampling iteration. For any positive bridge function $h$, the bridge identity and its
Monte Carlo approximation are
\begin{equation}
\begin{aligned}
Z_{\vx}
&=
\frac{
\E_{\widetilde{q}_{\vx}}
\left[
h(\vz)\pi_{\vx}(\vz)
\right]
}{
\E_{p_{\hat{\vtheta}}(\vz \mid \vx)}
\left[
h(\vz)\widetilde{q}_{\vx}(\vz)
\right]
},\qquad
\widehat{Z}_{\vx}
&=
\frac{
S^{-1}\sum_{s=1}^{S}
h(\widetilde{\vz}_s)\pi_{\vx}(\widetilde{\vz}_s)
}{
M_{\mathrm{b}}^{-1}\sum_{j=1}^{M_{\mathrm{b}}}
h(\vz_j^\star)\widetilde{q}_{\vx}(\vz_j^\star)
}.
\end{aligned}
\label{eq:bridge_identity}
\end{equation}

BayesNDE uses the asymptotically optimal bridge function
\citep{meng1996simulating}
\begin{equation}
h(\vz)
\propto
\frac{1}{
s_p\pi_{\vx}(\vz)
+
s_q Z_{\vx}\widetilde{q}_{\vx}(\vz)
},
\qquad
s_p
=
\frac{M_{\mathrm{eff}}}{M_{\mathrm{eff}}+S},
\quad
s_q
=
\frac{S}{M_{\mathrm{eff}}+S},
\label{eq:bridge_function}
\end{equation}
where $M_{\mathrm{eff}}$ denotes an effective posterior sample count computed
from the held-out bridge draws, as detailed in
Appendix~\ref{app:implementation}. Because $h^\star(\vz)$ itself depends on the unknown normalizing constant, $Z_{\vx}$ is obtained through the fixed-point update
\begin{equation}
\widehat{Z}_{\vx}^{(t+1)}
=
\frac{
\displaystyle
\frac{1}{S}
\sum_{s=1}^{S}
\frac{
\pi_{\vx}(\widetilde{\vz}_s)
}{
s_p\pi_{\vx}(\widetilde{\vz}_s)
+
s_q\widehat{Z}_{\vx}^{(t)}
\widetilde{q}_{\vx}(\widetilde{\vz}_s)
}
}{
\displaystyle
\frac{1}{M_{\mathrm{b}}}
\sum_{j=1}^{M_{\mathrm{b}}}
\frac{
\widetilde{q}_{\vx}(\vz_j^\star)
}{
s_p\pi_{\vx}(\vz_j^\star)
+
s_q\widehat{Z}_{\vx}^{(t)}
\widetilde{q}_{\vx}(\vz_j^\star)
}
}.
\label{eq:bridge_update}
\end{equation}
The iteration is initialized at
$\widehat{Z}_{\vx}^{(0)}=\widehat{Z}_{\mathrm{IS}}$, and continued until convergence. The resulting
$\log \widehat{p}_{\hat{\vtheta}}(\vx)=\log\widehat{Z}_{\vx}$ is reported as the estimated pointwise log-density.

For conditional density estimation with an observed conditioning variable
$\vy$, we extend the Bayesian generative model to a conditional generative architecture and the same procedure applies after replacing
$p_{\hat{\vtheta}}(\vx\mid\vz)$ by
$p_{\hat{\vtheta}}(\vx\mid\vz,\vy)$ throughout, yielding an estimate of
$p_{\hat{\vtheta}}(\vx\mid\vy)$. See details in Appendix~\ref{app:conditional_density}.

\subsection{Training Objectives and Model Architecture}
\label{sec:training_architecture}

We briefly summarize the training objectives and network architectures
used to instantiate the Bayesian generative model. For EGM initialization, the generator $G$ and encoder $E$ are trained
using a weighted combination of adversarial matching, bidirectional
reconstruction, distribution matching through MMD
\citep{gretton2012kernel} and sliced-Wasserstein discrepancy
\citep{kolouri2019sliced}, correlation matching, and
conditional-variance regularization:
\begin{equation}
\begin{aligned}
\mathcal{L}_{\mathrm{EGM}}
={}&
\lambda_{\mathrm{adv}}\mathcal{L}_{\mathrm{adv}}
+\lambda_{\mathrm{rec}}\mathcal{L}_{\mathrm{rec}}
+\lambda_{\mathrm{corr}}\mathcal{L}_{\mathrm{corr}}
\\
&+
\lambda_{\mathrm{MMD}}\mathcal{L}_{\mathrm{MMD}}
+\lambda_{\mathrm{SW}}\mathcal{L}_{\mathrm{SW}}
+\lambda_{\mathrm{var}}\mathcal{L}_{\mathrm{logvar}},
\end{aligned}
\label{eq:egm_objective}
\end{equation}
where the adversarial and reconstruction terms encourage agreement
between the data and latent representations, while the remaining terms regularize distributional,
dependence, and conditional-variance
properties. The detailed experimental settings are reported in
Appendix~\ref{app:implementation}.

During the subsequent stochastic iterative updating, the generative parameters are optimized using the conditional negative log-likelihood together with the regularization terms:
\begin{equation}
\mathcal{J}_{\mathrm{iter}}(\vtheta)
=
-l_{\sB}(\vtheta)
+
\lambda_{\mathrm{PP}}\mathcal{L}_{\mathrm{MMD,PP}}
+
\lambda_{\mathrm{corr}}^{\mathrm{iter}}\mathcal{L}_{\mathrm{corr}}
+
\lambda_{\mathrm{var}}^{\mathrm{iter}}\mathcal{L}_{\mathrm{logvar}},
\label{eq:iterative_objective}
\end{equation}
where $\mathcal{L}_{\mathrm{MMD,PP}}$ matches the prior-predictive
distribution to the observed data.

For vector-valued data, the generative model $G_{\vtheta}$ adopts five width-256 hidden layers for
$p\leq10$ and five width-256 residual blocks for $p>10$ with separate conditional-mean and diagonal-variance heads, with a Softplus
transformation to ensure positive variances. Detailed loss definitions,
architecture settings, optimization hyperparameters, and regime-specific
weights are provided in Appendix~\ref{app:implementation}.

\section{Theory}
\subsection{Consistency of Adaptive Bridge Estimation}
\label{sec:theory_consistency}

BayesNDE constructs a bridge proposal from posterior
samples, so the proposal itself is data-adaptive. We show that this adaptation
does not alter the asymptotic target of the density estimator, provided that
the proposal-fitting samples are separated from those used in bridge estimation.

\begin{theorem}[Consistency of posterior-adaptive bridge estimation]
\label{thm:adaptive_bridge_consistency}
Fix an observation $x$ and a fitted generative model $\hat{\vtheta}$.
Let $\pi_x(z)=p_{\hat{\vtheta}}(x\mid z)p_Z(z)$
and
$Z_x=\int \pi_x(z)\,dz=p_{\hat{\vtheta}}(x)$.
Suppose that the adaptive proposal is constructed from a separate posterior
sample, has support wherever $\pi_x$ is positive, and that the proposal and
held-out posterior samples satisfy the corresponding laws of large numbers.
Then the bridge estimate $\widehat Z_x$ produced by BayesNDE satisfies
\[
\widehat Z_x \xrightarrow{p} Z_x
=
p_{\hat{\vtheta}}(x)
\]
as the numbers of proposal and posterior samples tend to infinity.
\end{theorem}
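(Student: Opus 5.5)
Conditioning on the proposal-fitting set $\mathcal{D}_{\mathrm{fit}}$ is the first move: it freezes the defensive proposal $\widetilde q_x$ as a deterministic density. By sample splitting, $\mathcal{D}_{\mathrm{bridge}}$ is independent of $\mathcal{D}_{\mathrm{fit}}$ (or, for a single HMC chain, the assumed LLN holds conditionally). The proposal draws $\widetilde z_s$ are generated from $\widetilde q_x$ after fitting, so conditionally on $\mathcal{D}_{\mathrm{fit}}$ the problem is a standard bridge-sampling problem with a fixed proposal. Consistency therefore reduces to consistency of the fixed-point bridge estimator for a fixed $\widetilde q_x$. Because convergence in probability conditional on $\mathcal{D}_{\mathrm{fit}}$ for almost every realization implies unconditional convergence (dominated convergence applied to $P(|\widehat Z_x-Z_x|>\eta\mid\mathcal{D}_{\mathrm{fit}})$), this reduction suffices.

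For fixed $\widetilde q_x$, I would introduce the population map
\[
\Phi(r)=\frac{\E_{\widetilde q_x}\!\left[\pi_x/(s_p\pi_x+s_q r\widetilde q_x)\right]}{\E_{p(z\mid x)}\!\left[\widetilde q_x/(s_p\pi_x+s_q r\widetilde q_x)\right]},
\]
with $s_p,s_q$ treated as converging to limits in $(0,1)$. Writing $p(z\mid x)=\pi_x/Z_x$, both expectations equal $\int \pi_x\widetilde q_x/(s_p\pi_x+s_qr\widetilde q_x)\,dz$ up to the factor $Z_x$, so $\Phi(r)=Z_x$ for every $r>0$. This is the bridge identity applied with $h_r=1/(s_p\pi_x+s_qr\widetilde q_x)$, which requires the support condition: $\widetilde q_x>0$ wherever $\pi_x>0$. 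That condition holds automatically thanks to the defensive prior component. The integrands are bounded, by $1/s_p$ and $1/(s_q r)$ respectively, so all expectations are finite, and the integral is strictly positive by the support condition. Hence $Z_x$ is the unique fixed point of the population map (it is in fact constant).

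Next, I would show that the empirical map $\widehat\Phi_{S,M}(r)$ (the right side of \eqref{eq:bridge_update}) converges to $\Phi(r)$ uniformly on compact intervals $[a,b]\subset(0,\infty)$. Pointwise convergence follows from the assumed LLNs, since the summands are bounded. Uniformity follows because each summand is monotone in $r$, so the numerator and denominator averages are monotone functions of $r$ converging pointwise to continuous limits, and a Pólya/Glivenko–Cantelli-type argument upgrades this to uniform convergence. The denominator limit is bounded away from zero on $[a,b]$, so the ratio converges uniformly.

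The main obstacle is passing from the fixed point of $\Phi$ to that of the iteration, since $\widehat Z_x$ is the limit of iterates initialized at $\widehat Z_{\mathrm{IS}}$. First, $\widehat Z_{\mathrm{IS}}\to Z_x$ in probability by the LLN, because $\pi_x/\widetilde q_x$ is integrable under $\widetilde q_x$ with mean $Z_x$. Then I would use the known monotone structure of Meng–Wong's iteration: the sample map $r\mapsto \widehat\Phi(r)$ is increasing, while $\widehat\Phi(r)/r$ is decreasing. This gives a unique empirical fixed point $\widehat r$, and the iteration converges to it from any positive start. It remains to show $\widehat r\in[Z_x(1-\eta),Z_x(1+\eta)]$ with probability tending to one. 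This follows from uniform convergence: with high probability $\widehat\Phi(Z_x(1-\eta))>Z_x(1-\eta)$ and $\widehat\Phi(Z_x(1+\eta))<Z_x(1+\eta)$, so the unique crossing of the line $y=r$ lies between these two points. Combining this with the conditioning argument yields $\widehat Z_x\xrightarrow{p}Z_x$. If the monotonicity of $\widehat\Phi(r)/r$ proved delicate, I would fall back on interpreting $\widehat Z_x$ as the root of the estimating equation, equivalently the maximizer of a concave log-likelihood in $\log r$ (logistic-regression form), and invoke consistency of M-estimators of concave criteria.
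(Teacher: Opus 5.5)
Your conditioning on $\mathcal D_{\mathrm{fit}}$, the observation that the population map is identically $Z_x$, and the uniform convergence on compacts all match the paper. The paper uses uniform Lipschitz bounds in $c$ plus a finite grid where you use monotonicity in $r$ and a P\'olya-type argument; both work. The gap is the claim that $r\mapsto\widehat\Phi(r)$ is increasing, on which your convergence-from-any-start argument rests. It is false in general. Write $\ell_s=\pi_x(\widetilde z_s)/\widetilde q_x(\widetilde z_s)$ and $L_j=\pi_x(z_j^\star)/\widetilde q_x(z_j^\star)$. Then the numerator $S^{-1}\sum_s \ell_s/(s_p\ell_s+s_qr)$ and the denominator $M^{-1}\sum_j 1/(s_pL_j+s_qr)$ are \emph{both} decreasing in $r$, so their ratio can move either way. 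With a single draw of each, $\widehat\Phi(r)=\ell\,(s_pL+s_qr)/(s_p\ell+s_qr)$, which is strictly decreasing whenever $L>\ell$. That is the typical case, since posterior draws sit where $\pi_x/\widetilde q_x$ is large. More generally, $\frac{d}{dr}\log\widehat\Phi$ fluctuates in sign around its population value $0$. So the monotone-iterates argument does not show that the iteration converges.

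What survives is the other half of your monotone structure. $\widehat\Phi(r)/r$ is strictly decreasing, because the numerator decreases and $r$ times the denominator increases. Given some $\ell_s>0$, it runs from $+\infty$ to $0$, so there is a unique empirical fixed point $\widehat r$. Your sign check at $Z_x(1\pm\eta)$ then localizes $\widehat r$ with probability tending to one; this needs only the LLN at those two points, not uniform convergence. To identify $\widehat r$ with the reported $\widehat Z_x$, you have two options:
\begin{itemize}
\item assume, as the paper does in (A4), that the iteration converges to a fixed point, so that uniqueness identifies the limit; or
\item prove convergence correctly. In $u=\log r$, the map $\psi(u)=\log\widehat\Phi(e^u)$ satisfies $\psi'(u)=\bar t_B-\bar t_A$, where $\bar t_A,\bar t_B\in(0,1)$ are weighted averages of $s_qr/(s_p\ell+s_qr)$ over the proposal and posterior draws. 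Hence $|\psi'|<1$, and the iteration is a contraction on $[\hat u-|u_0-\hat u|,\hat u+|u_0-\hat u|]$.
\end{itemize}
Your concave reverse-logistic fallback likewise gives uniqueness of the root, but not convergence of the iteration.

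Once repaired, your route genuinely differs from the paper's. The paper never uses uniqueness. It shows that $K=[Z_x/2,3Z_x/2]$ is mapped into itself by $\widehat T_{M,S}$ with probability tending to one. Iterates started at the consistent $\widehat Z_{\mathrm{IS}}$ therefore stay in $K$, and any fixed-point limit satisfies $|\widehat Z_x-Z_x|\le\sup_K|\widehat T_{M,S}-Z_x|$. That argument needs the IS initializer and uniform convergence. Yours shows the bridge root itself is consistent regardless of initialization.
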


Theorem~\ref{thm:adaptive_bridge_consistency} shows that posterior adaptation
affects the efficiency of bridge estimation, but not its asymptotic target.
The result is conditional on the fitted generative model and therefore concerns
the consistency of the Monte Carlo density evaluation step, rather than
consistency with respect to the unknown data-generating density.
The full assumptions and proof under the regularity conditions are given in Appendix~\ref{app:adaptive_bridge_proof}.

\subsection{Posterior--proposal overlap and estimation efficiency}

Classical bridge-sampling theory shows that the bridge function and the overlap between the two sampling distributions determine the efficiency of normalizing-constant estimation \citep{meng1996simulating,gronau2020bridgesampling}. We specialize this analysis to the latent posterior $p_{\boldsymbol{x}} = \pi_{\boldsymbol{x}}/Z_{\boldsymbol{x}}$ and the defensive proposal $\tilde q_{\boldsymbol{x}}$, treated as fixed given $\mathcal{D}_{\mathrm{fit}}$. Let $N = M_b + S$ with $M_b/N \to s_p \in (0,1)$ and $s_q = 1 - s_p$, and define, for any nonnegative function $q$,
\begin{equation}
  \mathcal{O}_{\boldsymbol{x}}(q) = \int \frac{p_{\boldsymbol{x}}(\boldsymbol{z})\,q(\boldsymbol{z})}{s_p\,p_{\boldsymbol{x}}(\boldsymbol{z}) + s_q\,q(\boldsymbol{z})}\,\mathrm{d}\boldsymbol{z}.
\end{equation}

\begin{theorem}[Efficiency under posterior--proposal overlap]
\label{thm:efficiency}
Suppose the $M_b$ held-out draws are i.i.d.\ from $p_{\boldsymbol{x}}$ and the $S$ proposal draws are i.i.d.\ from $\tilde q_{\boldsymbol{x}}$, independently of each other. For the optimal bridge estimator of \citet{meng1996simulating},
\[
  \sqrt{N}\bigl(\widehat Z_{\boldsymbol{x}}/Z_{\boldsymbol{x}} - 1\bigr) \xrightarrow{d} \mathcal{N}\Bigl(0,\ \tfrac{1}{s_p s_q}\bigl\{\mathcal{O}_{\boldsymbol{x}}(\tilde q_{\boldsymbol{x}})^{-1} - 1\bigr\}\Bigr).
\]
Moreover, $0 < \mathcal{O}_{\boldsymbol{x}}(\epsilon\,p_Z) \le \mathcal{O}_{\boldsymbol{x}}(\tilde q_{\boldsymbol{x}}) \le 1$, where the lower bound does not depend on the fitted mixture $q_{\boldsymbol{x}}$ and the upper bound is attained if and only if $\tilde q_{\boldsymbol{x}} = p_{\boldsymbol{x}}$ almost everywhere.
\end{theorem}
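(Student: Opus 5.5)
The proof has two parts: an asymptotic-normality statement for the optimal bridge estimator, and monotonicity and extremal properties of the overlap functional $\mathcal{O}_{\boldsymbol{x}}$.

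\emph{Asymptotic normality.} Write $r=Z_{\boldsymbol{x}}$ and $\tilde q=\tilde q_{\boldsymbol{x}}$. Consider the bridge estimator~\eqref{eq:bridge_update} at the oracle bridge function
\[
h^\star=\frac{1}{s_p\pi_{\boldsymbol{x}}+s_q r\tilde q},
\]
with $M_b/N\to s_p$ in place of $M_{\mathrm{eff}}$.
\begin{enumerate}
\item Its numerator is $A_S=S^{-1}\sum_s h^\star(\widetilde{\boldsymbol z}_s)\pi_{\boldsymbol x}(\widetilde{\boldsymbol z}_s)$ and its denominator is $B_{M}=M_b^{-1}\sum_j h^\star(\boldsymbol z_j^\star)\tilde q(\boldsymbol z_j^\star)$.
\item Both summands are bounded: by $1/s_p$ and $1/(s_q r)$ respectively. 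Hence all moments exist, and the independent CLTs for the two samples apply without further conditions.
\item Writing $\pi_{\boldsymbol x}=r p_{\boldsymbol x}$, the means are $\E A_S=\mathcal{O}:=\mathcal{O}_{\boldsymbol x}(\tilde q)$ and $\E B_M=\mathcal{O}/r$, so that $A_S/B_M\to r$.
\item The delta method gives
\[
N\operatorname{Var}(\widehat Z/r)\approx \frac{1}{s_q}\frac{\operatorname{Var}_{\tilde q}(h^\star\pi)}{\mathcal O^2}+\frac{1}{s_p}\frac{\operatorname{Var}_{p}(h^\star\tilde q)}{(\mathcal O/r)^2}.
\]
\item Let $D=s_p p+s_q\tilde q$. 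Then
\[
\E_{\tilde q}[(h^\star\pi)^2]=\int p^2\tilde q/D^2,\qquad r^2\E_p[(h^\star\tilde q)^2]=\int p\tilde q^2/D^2 .
\]
\item Combine the two second-moment terms with weights $1/s_q$ and $1/s_p$. This gives $\frac{1}{s_ps_q}\int p\tilde q(s_pp+s_q\tilde q)/D^2=\mathcal O/(s_ps_q)$. The squared-mean terms contribute $-\mathcal O^2(1/s_q+1/s_p)=-\mathcal O^2/(s_ps_q)$.
\item Dividing by $\mathcal O^2$ yields the variance $\frac{1}{s_ps_q}(\mathcal O^{-1}-1)$.
\end{enumerate}

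\emph{From oracle to implemented estimator.} This is the step I expect to be the main technical obstacle. The implemented estimator is the fixed point $\widehat Z=\widehat Z^{(\infty)}$, in which $r$ is replaced by $\widehat Z$ itself; we must show this has the same limit law. I would follow Meng--Wong's argument.
\begin{enumerate}
\item The fixed-point equation is equivalent to
\[
\frac{1}{S}\sum_s\frac{s_p\pi}{s_p\pi+s_q\widehat Z\tilde q}=\frac{s_q\widehat Z}{M_b}\sum_j\frac{\tilde q}{s_p\pi+s_q\widehat Z\tilde q}.
\]
\item Under the stated identity $s_p=M_b/N$ (up to $o(1)$), this is an estimating equation $\Psi_N(\widehat Z)=0$.
\item $\Psi_N$ is monotone in $\widehat Z$, which gives consistency; Theorem~\ref{thm:adaptive_bridge_consistency} can also be invoked here.
\item The derivative of $\Psi_N$ is bounded, and $\Psi_N(r)$ has mean zero at the true value.
\item A standard Z-estimator linearization then shows that $\sqrt N(\widehat Z/r-1)$ equals the oracle expansion plus $o_p(1)$. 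The sensitivity term is exactly $\mathcal O$, so the asymptotic variance is unchanged.
\item The discrepancy between $s_p$ and $M_b/N$ contributes only $o(1)$.
\end{enumerate}

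\emph{Bounds on the overlap.} The bounds come from properties of the map $q\mapsto \frac{pq}{s_pp+s_qq}$.
\begin{enumerate}
\item \textbf{Monotonicity.} Its $q$-derivative is $\frac{s_pp^2}{(s_pp+s_qq)^2}\ge0$, so the integrand is nondecreasing in $q$ pointwise. Since $\tilde q\ge\epsilon p_Z$ pointwise, this gives $\mathcal O(\epsilon p_Z)\le\mathcal O(\tilde q)$.
\item \textbf{Strict positivity of the lower bound.} Both $p_{\boldsymbol x}>0$ and $p_Z>0$ hold on a set of positive measure, namely wherever $\pi_{\boldsymbol x}>0$, since the Gaussian prior is everywhere positive. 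Hence $\mathcal O(\epsilon p_Z)>0$, and the bound depends only on $\epsilon$, $p_Z$ and $p_{\boldsymbol x}$, not on the fitted mixture $q_{\boldsymbol x}$.
\item \textbf{Upper bound.} Write $\mathcal O(\tilde q)=\E_p[\tilde q/D]$. By Cauchy--Schwarz, or Jensen,
\[
1=\Bigl(\int\sqrt{p\tilde q/D}\,\sqrt{pD/\tilde q}\Bigr)^2\ \ldots
\]
which is somewhat clumsy. Instead, use the harmonic-mean argument:
\[
\frac{p\tilde q}{s_pp+s_q\tilde q}\le s_q p+s_p\tilde q ,
\]
because weighted harmonic $\le$ weighted arithmetic mean with swapped weights. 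Integrating gives $\mathcal O\le s_q+s_p=1$.
\item \textbf{Equality case.} Equality in the harmonic--arithmetic mean inequality holds pointwise iff $p=\tilde q$ a.e. (and trivially on the set where both vanish), which gives the stated characterization.
\end{enumerate}
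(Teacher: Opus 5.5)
Your oracle computation and overlap bounds follow the paper's proof almost line for line.

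\begin{itemize}
\item \emph{Oracle CLT.} The paper writes $\widehat Z/Z=\bar U_S/\bar V_{M_b}$ with $U=p/d$ and $V=q/d$. These are your $h^\star\pi_{\boldsymbol x}$ and $Z_{\boldsymbol x}h^\star\tilde q$. It applies the joint CLT and the delta method, then collapses the variance with $s_p\E_q[U^2]+s_q\E_p[V^2]=\mathcal O$, which is your step~6.
\item \emph{Overlap bounds.} The paper uses the same ingredients: pointwise monotonicity in $q$, $\tilde q\ge\epsilon p_Z$, positivity of the Gaussian prior, and the weighted harmonic--arithmetic mean inequality with swapped weights.
\end{itemize}

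The paper's proof stops at the oracle estimator, with $h^\star$ evaluated at the true $Z_{\boldsymbol x}$, and never analyses the fixed point of the iteration. Your oracle-to-implemented step is therefore a genuine addition. It is a useful one, because it makes the limit law apply to the quantity BayesNDE actually reports.

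As written, however, that extra step contains an error. Let $D_c=s_p\pi_{\boldsymbol x}+s_q c\,\tilde q$, and define
$A_S(c)=\frac1S\sum_s\pi_{\boldsymbol x}(\widetilde{\boldsymbol z}_s)/D_c(\widetilde{\boldsymbol z}_s)$ and
$B_M(c)=\frac1{M_b}\sum_j\tilde q(\boldsymbol z_j^\star)/D_c(\boldsymbol z_j^\star)$.
Your displayed ``equivalent'' equation is then $s_pA_S(c)=s_q\,c\,B_M(c)$. The actual fixed-point equation is $A_S(c)=c\,B_M(c)$, and the two agree only when $s_p=s_q$. Two consequences follow:
\begin{itemize}
\item Since $A(c)=r\,B(c)$ for every $c$, the population value of your form at $c=r$ is $(s_p-s_q)\mathcal O\neq 0$. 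So your step~4 (mean zero at the truth) fails.
\item The root of your form converges to $(s_p/s_q)\,r$, not to $r$.
\end{itemize}
The pooled reverse-logistic form you seem to have in mind uses raw sums, $s_p\sum_s\pi_{\boldsymbol x}/D_c=s_q\,c\sum_j\tilde q/D_c$. That form coincides with the fixed-point equation only when $s_p=M_b/N$ exactly.

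The fix is to work directly with $\Psi_N(c)=A_S(c)-c\,B_M(c)$:
\begin{itemize}
\item $\Psi_N$ is strictly decreasing in $c$, because $A_S$ decreases and $cB_M$ increases. Together with the pointwise law of large numbers, this gives a unique root and consistency.
\item $\E\,\Psi_N(r)=\mathcal O-\mathcal O=0$.
\item Write $\Psi'(r)=A'(r)-B(r)-rB'(r)$. With your $D=s_pp+s_q\tilde q$, both $A'(r)$ and $rB'(r)$ equal $-(s_q/r)\int p\tilde q^2/D^2$, so they cancel. This leaves $\Psi'(r)=-B(r)=-\mathcal O/r$, which is the ``sensitivity $\mathcal O$'' you anticipated.
\end{itemize}
Linearizing then gives $\widehat Z-r=(r/\mathcal O)\{A_S(r)-rB_M(r)\}+o_p(N^{-1/2})$. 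This is exactly the first-order expansion of the oracle ratio $A_S(r)/B_M(r)$, so the variance $\frac{1}{s_ps_q}(\mathcal O^{-1}-1)$ carries over to the implemented estimator.
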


No overlap or moment conditions are required, since the bridge integrands are bounded by $1/s_p$ and $1/s_q$. Stronger overlap between $\tilde q_{\boldsymbol{x}}$ and $p_{\boldsymbol{x}}$ yields smaller Monte Carlo variance, which motivates observation-specific proposals, while the defensive component bounds the variance however poorly $q_{\boldsymbol{x}}$ fits the posterior. For correlated HMC draws, replacing $M_b$ by $M_{\mathrm{eff}}$ in $(s_p, s_q)$ is a heuristic. The proof is given in Appendix~\ref{app:bridge_efficiency_proof}.

\section{Results}
\subsection{Experimental setup}

\paragraph{Datasets and evaluation.}
We evaluate BayesNDE on both synthetic benchmarks with known densities and real-world benchmarks. The synthetic experiments are designed to reflect two complementary challenges in density estimation. The independent Gaussian mixture (Independent GMM) represents a strongly multimodal
distribution: each coordinate independently follows an equally weighted
three-component Gaussian mixture with means $-1$, $0$, and $1$, yielding
$3^p$ modes in $p$ dimensions.
The involute distribution instead represents a highly nonlinear density
concentrated around a curved manifold. Detailed data generative processes are provided in Appendix ~\ref{app: indep_gmm and involute}. For each distribution, we generate $20{,}000$ i.i.d.\ observations and use $81\%/9\%/10\%$ for training, validation, and testing. 

In two dimensions, estimated densities are compared directly with the ground truth on a common grid; for the independent Gaussian mixture, we further evaluate dimensions from $2$ to $25$ using the Spearman rank correlation between true and estimated test densities. For real data, we use five tabular benchmarks from the UCI Machine Learning
Repository \citep{dua2017uci}. BANK and ParkTele are used for unconditional density estimation, while Pendigits10, Vehicle, and EEGEye are used for conditional density estimation with one-hot encoded class labels. Performance is measured by average test log-likelihood.

\paragraph{Baseline methods.}
We compare BayesNDE with six leading neural density estimators spanning autoregressive models, normalizing flows, continuous flows, and generative latent-variable models: MADE~\citep{germain2015made}, MAF~\citep{papamakarios2017masked}, Real NVP~\citep{dinh2017density}, Residual Flow (Resflow) ~\citep{chen2019residual}, Conditional Flow Matching (CFMs)~\citep{lipman2023flow}, and Roundtrip~\citep{liu2021density}. These baselines cover widely used approaches based on autoregressive factorization, invertible transformations, continuous-time flows, and flexible deep generative mappings. Implementation details of the baselines are provided in Appendix~\ref{app:baseline impl}.

\subsection{Simulation studies}

We first evaluate whether the estimated densities recover the underlying
structure of the two synthetic distributions. Figure~\ref{fig:involute} shows density maps of the true and estimated densities in two dimensions. BayesNDE closely recovers both the nonlinear geometry of the involute distribution and the nine separated modes of the Gaussian mixture, while competing methods exhibit varying degrees of smoothing, distortion, or spurious connections between modes. For example, on the involute benchmark, BayesNDE achieves a Spearman correlation of $0.959$, compared with $0.898$ for the strongest baseline, Roundtrip. 

We next examine how density estimation accuracy changes as the dimension increases. Using the Gaussian mixture benchmark, each coordinate independently follows a three-component mixture,
the number of modes grows exponentially as $3^p$, making the problem increasingly challenging with dimension. As shown in Figure~\ref{fig:indep_gmm_highdim}, BayesNDE achieves the highest Spearman correlation at every evaluated dimension. For example, when $p=20$, BayesNDE attains a correlation of $0.57$, compared with $0.43$ for the strongest competing method.

\begin{figure}[H]
    \centering
    \includegraphics[width=\linewidth]{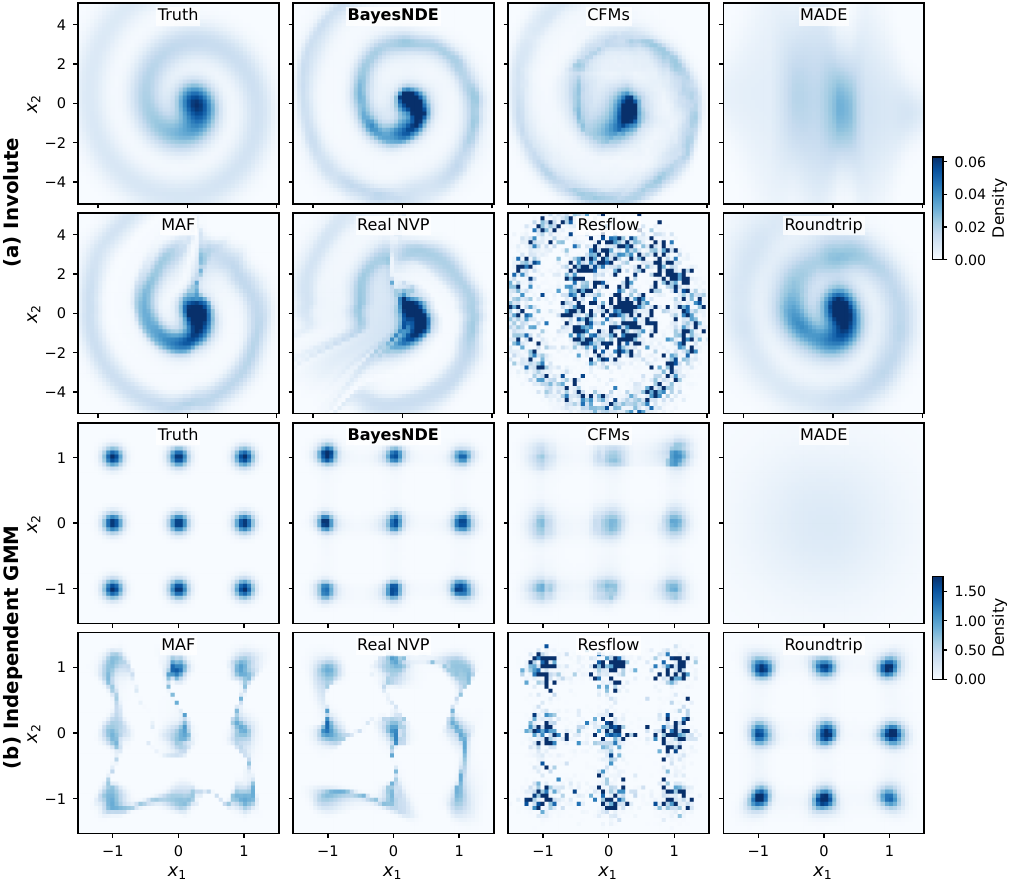}
    \caption{True and estimated densities for the two-dimensional involute distribution and independent Gaussian mixture.}
    \label{fig:involute}
\end{figure}

\subsection{Real datasets}
\label{real datasets}
Table~\ref{tab:real_data} reports average test log-likelihoods on the five real-world datasets, covering both unconditional and conditional density estimation. BayesNDE achieves the highest value on every dataset. The improvements are substantial across tasks. For instance, BayesNDE obtains the highest average test log-likelihood of $41.43$ on BANK, compared with $35.74$ for the best baseline Roundtrip. 

On the three conditional datasets, BayesNDE improves over the strongest competing method by $3.68$-$19.01$ nats. These consistent gains across datasets and task types demonstrate the effectiveness of BayesNDE for real-world density estimation.

\subsection{Density-Based Downstream Tasks}

We first evaluate the practical utility of BayesNDE for outlier detection on three ODDS benchmark datasets \citep{rayana2016odds}: Shuttle, Cardio, and Pendigits. All density estimators were trained only on the training split, and test observations were ranked by estimated log-density, with lower-density observations more likely to be outliers. We compared BayesNDE with five generative density estimators as well as two dedicated anomaly-detection methods, one-class SVM \citep[OC-SVM;][]{scholkopf2001estimating} and Isolation Forest \citep[I-Forest;][]{liu2008isolation}, using precision@$k$, which is the proportion of true anomalies among the $k$ highest-ranked candidates.

\begin{table}[h]
\caption{Average test log-likelihood (nats) on real-world datasets. Uncertainties are two standard
errors of the mean test log-likelihood across test points. The best result for each dataset is shown in
bold.}
\label{tab:real_data}
\begin{center}
\small
\begin{tabular}{lccccc}
\hline
\multicolumn{1}{c}{Method}
& \multicolumn{2}{c}{\bf Unconditional}
& \multicolumn{3}{c}{\bf Conditional}
\\
\cline{2-3}\cline{4-6}
& {\bf BANK} & {\bf ParkTele} & {\bf Pendigits10} & {\bf Vehicle} & {\bf EEGEye}
\\ \hline

MADE
& 14.38$\pm$0.27
& 34.34$\pm$0.67
& -54.56$\pm$0.96
& -51.84$\pm$1.41
& -50.25$\pm$0.18 \\

MAF
& -53.15$\pm$0.20
& 36.99$\pm$0.66
& -52.29$\pm$0.78
& -52.30$\pm$2.25
& -49.47$\pm$0.21 \\

Real NVP
& 27.21$\pm$0.38
& 36.45$\pm$0.68
& -54.59$\pm$1.16
& -63.03$\pm$3.61
& -49.87$\pm$0.22 \\

Resflow
& 23.68$\pm$0.18
& 36.25$\pm$0.63
& -47.77$\pm$0.64
& -45.97$\pm$1.13
& -48.33$\pm$0.29 \\

CFMs
& 10.23$\pm$0.07
& 16.32$\pm$0.19
& -54.40$\pm$0.36
& -49.65$\pm$0.87
& -51.45$\pm$0.14 \\

Roundtrip
& 35.74$\pm$0.13
& 49.64$\pm$0.33
& -48.59$\pm$0.08
& -40.10$\pm$0.70
& -44.83$\pm$0.08 \\

BayesNDE
& \textbf{41.43$\pm$0.17}
& \textbf{59.54$\pm$0.53}
& \textbf{-28.76$\pm$0.44}
& \textbf{-33.54$\pm$0.88}
& \textbf{-41.15$\pm$0.19} \\

\hline
\end{tabular}
\end{center}
\end{table}

\begin{table}[!htbp]
\caption{Precision@k on the ODDS anomaly-detection benchmarks, where k equals the number of outliers in the test split. Results are means over the same three random seeds for every method; OC-SVM is deterministic. Higher values are better.}
\label{tab:anomaly_detection}
\centering
\small
\setlength{\tabcolsep}{2.8pt}
\begin{tabular}{@{}lcccccccc@{}}
\toprule
 & OC-SVM & I-Forest & Real NVP & MAF & CFMs & Resflow & Roundtrip & BayesNDE \\
\midrule
Shuttle
& 0.953 & 0.958 & 0.735 & 0.861 & 0.932 & 0.477 & 0.958 & \textbf{0.965} \\
Cardio
& 0.533 & \textbf{0.556} & 0.178 & 0.200 & 0.511 & 0.244  & 0.311 & \textbf{0.556}  \\
Pendigits
& 0.154 & 0.128 & 0.000 & 0.000 & 0.000 & 0.000 & 0.000 & \textbf{0.180} \\
\bottomrule
\end{tabular}
\vspace{-10pt}
\end{table}

As shown in Table~\ref{tab:anomaly_detection}, BayesNDE achieves the highest precision@$k$ on all three datasets, including 0.556 on Cardio and 0.180 on the challenging Pendigits dataset. These results show that the density estimates produced by BayesNDE are informative for downstream outlier detection and remain competitive with methods designed specifically for this task.
\begin{figure}[H]
    \centering
    \includegraphics[width=\linewidth]{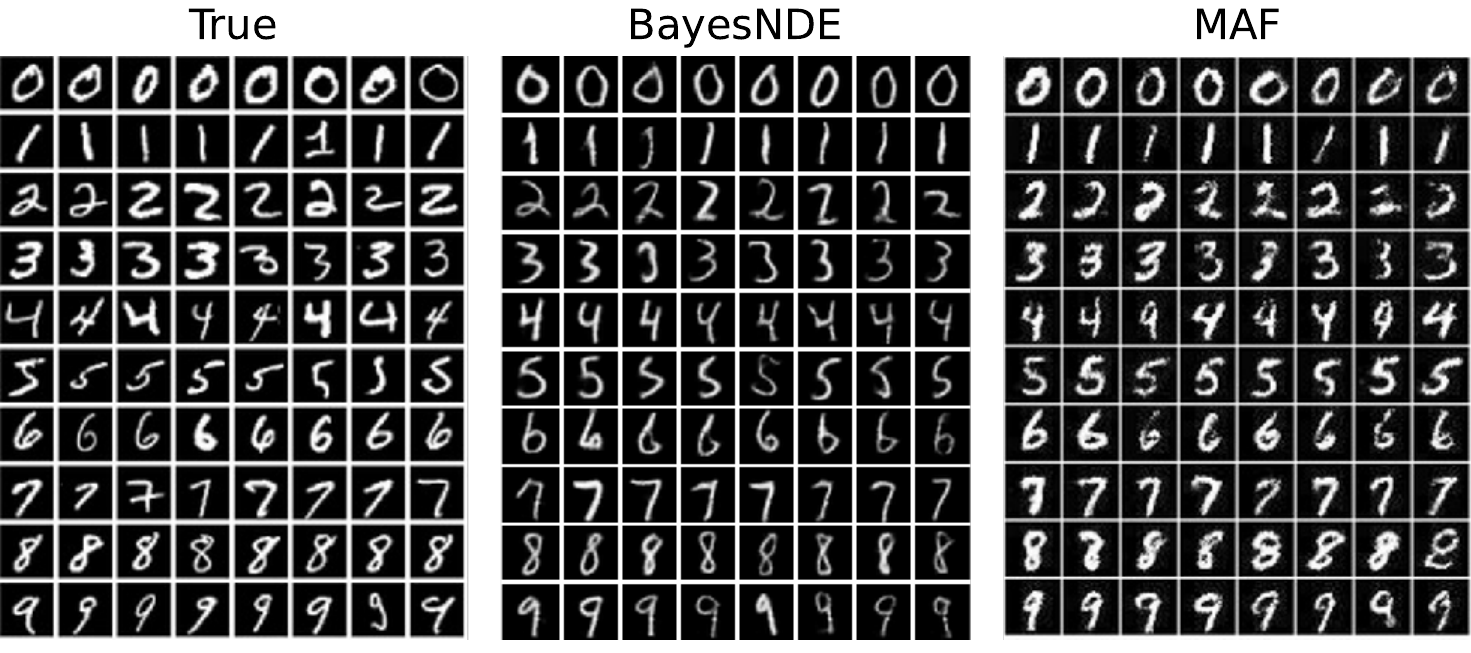}
    \caption{Conditional image generation on MNIST. The first column shows real MNIST examples; the remaining columns show samples generated by BayesNDE and MAF. For each digit class, representative generated images are shown in decreasing order of conditional density $p_\vtheta(\vx \mid y)$.}
    \label{fig:mnist_generation}
\end{figure}
Next, we evaluate BayesNDE on the MNIST image dataset for conditional generation and classification. As shown in Figure~\ref{fig:mnist_generation}, BayesNDE generates diverse and recognizable digits, with higher-density samples exhibiting more representative class-specific structure. With the uniform class prior $p(y)=1/10$, we classify each test image using $\hat y=\arg\max_y p(y\mid\vx)=\arg\max_y\{\log\widehat p_\vtheta(\vx\mid y)+\log p(y)\}=\arg\max_y\log\widehat p_\vtheta(\vx\mid y)$. BayesNDE achieves 98.6\% test accuracy, compared with 98.0\% for Roundtrip and 92.6\% for MAF.

\begin{figure}[H]
    \centering
    \begin{subfigure}[t]{0.50\textwidth}
        \centering
        \includegraphics[width=\linewidth]{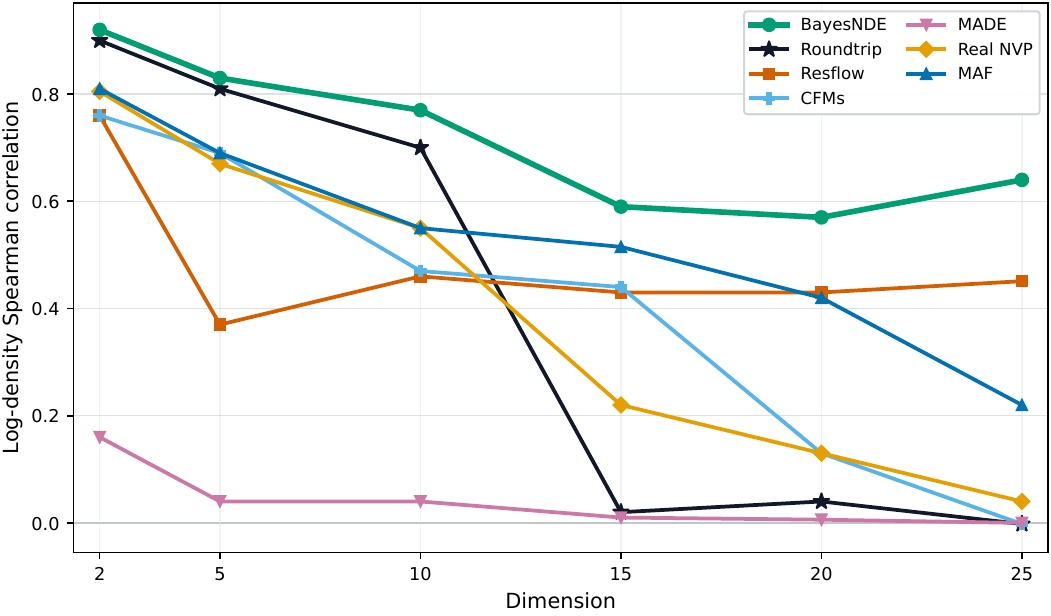}
        \caption{Spearman rank correlation on independent GMM.}
        \label{fig:indep_gmm_highdim}
    \end{subfigure}
    \hfill
    \begin{subfigure}[t]{0.40\textwidth}
        \centering
        \includegraphics[width=\linewidth]{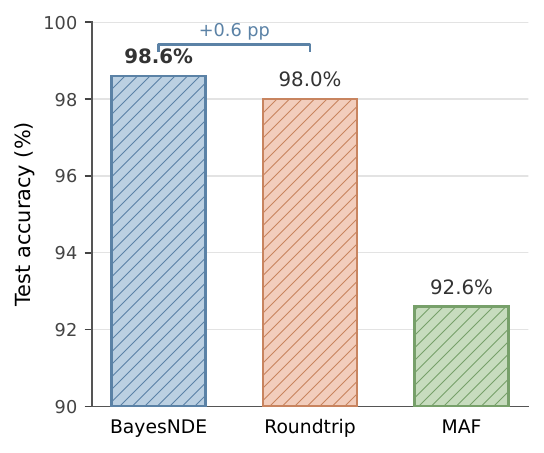} 
        \caption{Test accuracy on MNIST.}
        \label{fig:mnist_acc}
    \end{subfigure}
    
    \caption{Empirical evaluation across benchmarks. (a) Density estimation accuracy across varying dimensions; (b) Classification performance on MNIST.}
    \label{fig:overall_eval}
    \vspace{-15pt}
\end{figure}

\subsection{Ablation Analysis}

We ablate the main components of BayesNDE on the synthetic mixture setting, using the same 500 held-out observations for all comparisons. Table~\ref{tab:ablation} reports the density estimation accuracy. Bridge sampling attains higher ranking accuracy than importance sampling at
every posterior budget, reaching the best accuracy importance sampling attains
at any budget with four times fewer posterior samples (see details in Appendix ~\ref{app:implementation} S6). 
In the 2D ablation setting, removing the EGM warm start severely degrades density recovery. In 15 dimensions, removing correlation matching or the MMD terms also reduces density accuracy.
\begin{table}[htbp]
\centering
\caption{Ablation results on synthetic data.}
\label{tab:ablation}
\small
\setlength{\tabcolsep}{6pt}
\renewcommand{\arraystretch}{1.08}

\begin{tabular}{lc}
\toprule
Method & Spearman $\uparrow$ \\
\midrule
BayesNDE (2D)                & \textbf{0.9214} \\
\quad w.o. EGM init.         & 0.0129 \\
\quad w.o. var. reg.         & 0.8929 \\
\midrule
BayesNDE (15D)               & \textbf{0.6311} \\
\quad w.o. BS                & 0.5932 \\
\quad w.o. EGM reg. (MMD)    & 0.5848 \\
\quad w.o. EGM reg. (Corr)   & 0.6169 \\
\bottomrule
\end{tabular}

\end{table}

\section{Discussion}

We introduced BayesNDE, which casts pointwise density evaluation as posterior normalizing-constant estimation. Observation-specific posterior proposals and bridge sampling enable density evaluation without invertible architectures or Jacobian determinants, while improving density estimation and downstream performance.

Its main limitation is running time: BayesNDE is slower than CFMs, MAF and Roundtrip (Appendix~\ref{app:ISBS}). However, test-time posterior sampling and bridge estimation are independent across observations and can be further accelerated by parallel computing. Accuracy also depends on posterior exploration and proposal overlap in complex latent spaces. Future work will focus on faster training, posterior sampling and reusable proposals to reduce the cost of repeated density evaluation.

\section*{Acknowledgments}

Q.L. was supported in part by the National Human Genome Research
Institute (NHGRI) under Award Number R00HG013661, the Operations Core of the
Claude D. Pepper Older Americans Independence Center at Yale School of Medicine
(P30AG021342), and the YSPH Transformation Pilot Grant.

\bibliographystyle{plainnat}
\bibliography{references}
\appendix
\section{Appendix}

\subsection{Proofs of theory}
\textbf{Proof of Theorem ~\ref{thm:adaptive_bridge_consistency}}
\label{app:adaptive_bridge_proof}
We provide the technical conditions and proof for
Theorem~\ref{thm:adaptive_bridge_consistency}.
Throughout, the observation $x$ and the fitted generative parameter
$\hat{\vtheta}$ are fixed. Define
\[
\pi_x(z)
=
p_{\hat{\vtheta}}(x\mid z)p_Z(z),
\qquad
Z_x
=
\int \pi_x(z)\,dz,
\qquad
p_x(z)
=
\frac{\pi_x(z)}{Z_x}.
\]
By construction, $Z_x=p_{\hat{\vtheta}}(x)$.

Let $\mathcal D_{\mathrm{fit}}$ denote the posterior sample used to construct
the observation-specific proposal. Conditional on $\mathcal D_{\mathrm{fit}}$,
the fitted proposal $q_x$ and the defensive proposal
\[
\widetilde q_x(z)
=
(1-\epsilon)q_x(z)+\epsilon p_Z(z)
\]
are treated as fixed. For notational simplicity, write
$q(z)=\widetilde q_x(z)$.

Conditional on $\mathcal D_{\mathrm{fit}}$, assume:

\begin{enumerate}
    \item[(A1)] $q(z)>0$ whenever $\pi_x(z)>0$.

    \item[(A2)] The proposal draws
    $\widetilde z_1,\ldots,\widetilde z_S$ are i.i.d.\ from $q$, and the
    held-out posterior draws $z_1^\star,\ldots,z_M^\star$ satisfy the
    law of large numbers under $p_x$ for every bounded measurable
    function appearing below.

    \item[(A3)] The bridge weights $s_p$ and $s_q=1-s_p$ remain bounded
    away from zero: for some constant $\eta\in(0,1/2)$,
    \[
        \eta\le s_p,s_q\le 1-\eta.
    \]

    \item[(A4)] The fixed-point iteration is initialized by the
    importance-sampling estimator
    \[
        \widehat Z_{\mathrm{IS}}
        =
        \frac{1}{S}
        \sum_{s=1}^S
        \frac{\pi_x(\widetilde z_s)}{q(\widetilde z_s)},
    \]
    and the iteration is run to a converged fixed point
    $\widehat Z_x$.
\end{enumerate}

Condition (A2) allows the held-out posterior sample to be generated by an
ergodic Markov chain; independence of the posterior draws is not required.
When the proposal-fitting and bridge samples are obtained from a common
Markov chain, (A2) is understood conditionally on
$\mathcal D_{\mathrm{fit}}$.

For $c>0$, define
\[
A_S(c)
=
\frac{1}{S}
\sum_{s=1}^{S}
\frac{\pi_x(\widetilde z_s)}
{s_p\pi_x(\widetilde z_s)+s_qc\,q(\widetilde z_s)}
,
B_M(c)
=
\frac{1}{M}
\sum_{j=1}^{M}
\frac{q(z_j^\star)}
{s_p\pi_x(z_j^\star)+s_qc\,q(z_j^\star)}.
\]
The empirical bridge map is
\[
\widehat T_{M,S}(c)
=
\frac{A_S(c)}{B_M(c)}.
\]

Its population counterparts are
\[
A(c)
=
\mathbb E_q
\left[
\frac{\pi_x(Z)}
{s_p\pi_x(Z)+s_qc\,q(Z)}
\right]
,
B(c)
=
\mathbb E_{p_x}
\left[
\frac{q(Z)}
{s_p\pi_x(Z)+s_qc\,q(Z)}
\right].
\]

We first establish the key population identity. Since
$p_x(z)=\pi_x(z)/Z_x$,
\begin{align}
B(c)
&=
\int
\frac{q(z)}
{s_p\pi_x(z)+s_qc\,q(z)}
\frac{\pi_x(z)}{Z_x}\,dz
\nonumber\\
&=
\frac{1}{Z_x}
\int
\frac{\pi_x(z)q(z)}
{s_p\pi_x(z)+s_qc\,q(z)}
\,dz
\nonumber\\
&=
\frac{A(c)}{Z_x}.
\label{eq:appendix_population_relation}
\end{align}
Consequently,
\begin{equation}
T(c)
:=
\frac{A(c)}{B(c)}
=
Z_x
\qquad
\text{for every } c>0.
\label{eq:appendix_population_constant}
\end{equation}
Thus the population bridge map is constant: the choice of bridge iterate
$c$ and the fitted proposal affect Monte Carlo efficiency but not the
population target.

We next show that the empirical bridge map converges uniformly to this
population map on compact subsets of $(0,\infty)$.
Let $K=[a,b]\subset(0,\infty)$ and define
\[
f_c(z)
=
\frac{\pi_x(z)}
{s_p\pi_x(z)+s_qc\,q(z)},
\qquad
g_c(z)
=
\frac{q(z)}
{s_p\pi_x(z)+s_qc\,q(z)}.
\]
For all $c\in K$,
\[
0\le f_c(z)\le \frac{1}{s_p}\le\frac{1}{\eta}, 
0\le g_c(z)\le\frac{1}{s_qc}
\le\frac{1}{\eta a}.
\]
Hence both families are uniformly bounded.

Moreover, they are uniformly Lipschitz in $c$. Indeed,
\[
\left|
\frac{\partial f_c(z)}{\partial c}
\right|
=
\frac{s_q\pi_x(z)q(z)}
{\left[s_p\pi_x(z)+s_qc\,q(z)\right]^2}.
\]
Using $4uv\le(u+v)^2$ with
$u=s_p\pi_x(z)$ and $v=s_qc\,q(z)$ gives
\[
\left|
\frac{\partial f_c(z)}{\partial c}
\right|
\le
\frac{1}{4s_pa}
\le
\frac{1}{4\eta a}.
\]
Similarly,
\[
\left|
\frac{\partial g_c(z)}{\partial c}
\right|
=
\frac{s_q q(z)^2}
{\left[s_p\pi_x(z)+s_qc\,q(z)\right]^2}
\le
\frac{1}{s_q a^2}
\le
\frac{1}{\eta a^2}.
\]
Therefore the classes $\{f_c:c\in K\}$ and
$\{g_c:c\in K\}$ are uniformly bounded and equicontinuous in their
one-dimensional parameter $c$.

By the law of large numbers in (A2), convergence holds for every fixed
$c$. A finite-grid argument together with the preceding uniform
Lipschitz bounds then yields
\begin{equation}
\sup_{c\in K}
|A_S(c)-A(c)|
\xrightarrow{p}0
\label{eq:appendix_uniform_A}
\end{equation}
and
\begin{equation}
\sup_{c\in K}
|B_M(c)-B(c)|
\xrightarrow{p}0.
\label{eq:appendix_uniform_B}
\end{equation}

It remains to control the denominator. By (A1), the integrand defining
$A(c)$ is strictly positive on the support of $\pi_x$. Hence
$A(c)>0$ for every $c>0$. Since $A(c)$ is continuous in $c$, compactness
of $K$ gives $\inf_{c\in K}A(c)>0.
$
From \eqref{eq:appendix_population_relation},
\[
\inf_{c\in K}B(c)
=
\frac{1}{Z_x}
\inf_{c\in K}A(c)
>0.
\]
Combining this fact with
\eqref{eq:appendix_uniform_A}--\eqref{eq:appendix_uniform_B} and the
continuous mapping theorem gives
\begin{equation}
\sup_{c\in K}
\left|
\widehat T_{M,S}(c)-Z_x
\right|
\xrightarrow{p}0.
\label{eq:appendix_uniform_bridge}
\end{equation}

We now connect the uniform convergence result to the actual BayesNDE
iteration. First, the importance-sampling initializer is consistent.
Indeed, by (A1),
\[
\mathbb E_q
\left[
\frac{\pi_x(Z)}{q(Z)}
\right]
=
\int \pi_x(z)\,dz
=
Z_x,
\]
and the random variable $\pi_x(Z)/q(Z)$ is integrable because its
expectation equals the finite constant $Z_x$. Therefore, by the law of
large numbers,
\begin{equation}
\widehat Z_{\mathrm{IS}}
\xrightarrow{p} Z_x.
\label{eq:appendix_is_consistency}
\end{equation}

Choose, for example,
\[
K=
\left[
\frac{Z_x}{2},
\frac{3Z_x}{2}
\right].
\]
By \eqref{eq:appendix_is_consistency},
$\widehat Z_{\mathrm{IS}}\in K$ with probability tending to one.
Furthermore, by \eqref{eq:appendix_uniform_bridge},
\[
\sup_{c\in K}
\left|
\widehat T_{M,S}(c)-Z_x
\right|
<
\frac{Z_x}{4}
\]
with probability tending to one. On this event,
\[
\widehat T_{M,S}(K)
\subset
\left[
\frac{3Z_x}{4},
\frac{5Z_x}{4}
\right]
\subset K.
\]
Thus, once initialized in $K$, every subsequent bridge iterate remains
in $K$. If the iteration converges to the reported fixed point
$\widehat Z_x$, then
\[
\widehat Z_x
=
\widehat T_{M,S}(\widehat Z_x),
\qquad
\widehat Z_x\in K,
\]
and hence
\[
|\widehat Z_x-Z_x|
\le
\sup_{c\in K}
\left|
\widehat T_{M,S}(c)-Z_x
\right|
\xrightarrow{p}0.
\]
Therefore,
\[
\widehat Z_x
\xrightarrow{p}
Z_x
=
p_{\hat{\vtheta}}(x)
.
\]

\hfill$\square$

\textbf{Proof of Theorem~\ref{thm:efficiency}}
\label{app:bridge_efficiency_proof}

Fix $x$ and abbreviate
\[
p(z)=p_{\hat{\vtheta}}(z\mid x),
\qquad
q(z)=\widetilde q_x(z),
\qquad
Z=Z_x,
\]
so that $\pi_x(z)=Zp(z)$. Here
\[
\widetilde q_x(z)
=
(1-\epsilon)q_x(z)+\epsilon p_Z(z)
\]
is the defensive proposal used by BayesNDE.
Let $N=M_b+S$, with $M_b/N\to s_p\in(0,1)$ and
$S/N\to s_q=1-s_p$.

For independent posterior and proposal samples, the asymptotically optimal
bridge function of \citet{meng1996simulating} is proportional to
\[
h^\star(\vz)
=
\frac{1}{s_p\pi_x(z)+s_qZq(z)}
\propto
\frac{1}{s_p p(z)+s_q q(z)}.
\]
Define
\[
d(z)=s_p p(z)+s_q q(z),
\qquad
\mathcal O
=
\int \frac{p(z)q(z)}{d(z)}\,dz.
\]
Under $h^\star(\vz)$, the relative bridge estimator can be written as
\begin{equation}
\frac{\widehat Z}{Z}
=
\frac{\frac1S\sum_{s=1}^S U(\widetilde z_s)}
     {\frac1{M_b}\sum_{j=1}^{M_b} V(z_j^\star)},
\qquad
U(z)=\frac{p(z)}{d(z)},\quad
V(z)=\frac{q(z)}{d(z)}.
\label{eq:app_bridge_ratio}
\end{equation}
Since
$\mathbb E_q[U]=\mathbb E_p[V]=\mathcal O$, the joint central limit
theorem gives
\[
\sqrt N
\begin{pmatrix}
\overline U_S-\mathcal O\\
\overline V_{M_b}-\mathcal O
\end{pmatrix}
\xrightarrow{d}
\mathcal N\!\left(
0,
\begin{pmatrix}
\operatorname{Var}_q(U)/s_q & 0\\
0 & \operatorname{Var}_p(V)/s_p
\end{pmatrix}
\right).
\]
Applying the delta method to $g(a,b)=a/b$ at
$(\mathcal O,\mathcal O)$ yields
\begin{equation}
\sqrt N\left(\frac{\widehat Z}{Z}-1\right)
\xrightarrow{d}
\mathcal N(0,\sigma^2),
\qquad
\sigma^2
=
\frac{1}{\mathcal O^2}
\left[
\frac{\operatorname{Var}_q(U)}{s_q}
+
\frac{\operatorname{Var}_p(V)}{s_p}
\right].
\label{eq:app_bridge_delta}
\end{equation}

The variance simplifies using
\begin{align}
s_p\mathbb E_q[U^2]+s_q\mathbb E_p[V^2]
&=
\int
\frac{s_p p(z)^2q(z)+s_qp(z)q(z)^2}{d(z)^2}\,dz \nonumber\\
&=
\int\frac{p(z)q(z)}{d(z)}\,dz
=
\mathcal O.
\label{eq:app_overlap_identity}
\end{align}
Together with
$\mathbb E_q[U]=\mathbb E_p[V]=\mathcal O$ and
$s_p+s_q=1$, this gives
\[
\frac{\operatorname{Var}_q(U)}{s_q}
+
\frac{\operatorname{Var}_p(V)}{s_p}
=
\frac{\mathcal O-\mathcal O^2}{s_p s_q}.
\]
Substitution into \eqref{eq:app_bridge_delta} therefore yields
\[
\sqrt{M_b+S}
\left(
\frac{\widehat Z_x}{Z_x}-1
\right)
\xrightarrow{d}
\mathcal N\!\left(
0,\,
\frac{1}{s_p s_q}
\left[
\mathcal O_x(\widetilde q_x)^{-1}-1
\right]
\right).
\]

It remains to characterize the overlap. First, for fixed $p(z)\ge 0$,
define
\[
f_z(q)
=
\frac{p(z)q}
{s_p p(z)+s_q q},
\qquad q\ge 0.
\]
For $p(z)>0$,
\[
\frac{\partial f_z(q)}{\partial q}
=
\frac{s_p p(z)^2}
{\left[s_p p(z)+s_q q\right]^2}
\ge 0,
\]
so the overlap integrand is nondecreasing in its second argument.
Because the defensive proposal satisfies
\[
\widetilde q_x(z)
=
(1-\epsilon)q_x(z)+\epsilon p_Z(z)
\ge
\epsilon p_Z(z)
\]
pointwise, it follows that
\[
\mathcal O_x(\widetilde q_x)
\ge
\mathcal O_x(\epsilon p_Z).
\]
Moreover, under the Gaussian latent prior used by BayesNDE,
$p_Z(z)>0$ for all $z$, and hence
$\mathcal O_x(\epsilon p_Z)>0$.

For the upper bound, by the weighted harmonic--arithmetic mean
inequality,
\[
\frac{p(z)q(z)}
{s_p p(z)+s_q q(z)}
=
\left(
\frac{s_p}{q(z)}
+
\frac{s_q}{p(z)}
\right)^{-1}
\le
s_p q(z)+s_q p(z),
\]
with the left-hand side defined as zero whenever $p(z)q(z)=0$.
Integrating gives
\[
\mathcal O_x(q)\le s_p+s_q=1.
\]
Equality holds if and only if $p(z)=q(z)$ almost everywhere.
Taking $q=\widetilde q_x$ therefore gives
\[
0<
\mathcal O_x(\epsilon p_Z)
\le
\mathcal O_x(\widetilde q_x)
\le1,
\]
with the upper bound attained if and only if
$\widetilde q_x=p_x$ almost everywhere.

\subsection{Datasets in experiments}

\textbf{S1. Data Generating Processes for Synthetic Distributions}
\label{app: indep_gmm and involute}
\paragraph{Independent Gaussian mixture.}
Following \citep{liu2021density}, we consider an independent Gaussian mixture (Independent GMM) distribution and extend the original two-dimensional construction to $p$ dimensions. Specifically, we generate $\mathbf{X}=(X_1,\ldots,X_p)^\top$ with mutually independent coordinates, where each coordinate follows the same three-component Gaussian mixture, $X_j \sim \frac{1}{3}\mathcal{N}(-1,0.1^2)+\frac{1}{3}\mathcal{N}(0,0.1^2)+\frac{1}{3}\mathcal{N}(1,0.1^2)$, for $j=1,\ldots,p$. Equivalently, the joint density is $$p(\mathbf{x})=\prod_{j=1}^{p}\left[\frac{1}{3}\sum_{\mu\in\{-1,0,1\}}\phi(x_j;\mu,0.1^2)\right]$$, where $\phi(\cdot;\mu,\sigma^2)$ denotes the Gaussian density with mean $\mu$ and variance $\sigma^2$. This construction yields $3^p$ equally weighted modes located at the Cartesian product $\{-1,0,1\}^p$. As $p$ increases, the number of modes therefore grows exponentially, providing an increasingly challenging benchmark for evaluating the ability of density estimators to recover highly multimodal distributions in higher dimensions.

\paragraph{Involute distribution.}
The involute distribution is constructed by first sampling a latent variable $R\sim\mathrm{Uniform}(0,2\pi)$ and then independently generating $X_1\mid R=r\sim\mathcal{N}(r\sin(2r),0.4^2)$ and $X_2\mid R=r\sim\mathcal{N}(r\cos(2r),0.4^2)$. Thus, conditional on $R=r$, the observation is centered along the nonlinear curve $(r\sin(2r),r\cos(2r))$ with isotropic Gaussian perturbation. The corresponding marginal density is $$p(\mathbf{x})=\frac{1}{2\pi}\int_0^{2\pi}\phi(x_1;r\sin(2r),0.4^2)\phi(x_2;r\cos(2r),0.4^2)\,dr$$, which can be evaluated numerically when the ground-truth density is required. In contrast to the isolated modes of Independent GMM, this distribution concentrates probability mass around a highly nonlinear curved structure and is therefore used to assess the ability of density estimators to capture complex nonlinear geometries.

\textbf{S2. Details of data preprocessing for the outlier-detection
datasets used in our study.}

\medskip
\noindent
\textit{Shuttle.}
The Shuttle dataset
(\url{http://odds.cs.stonybrook.edu/shuttle-dataset/})
is derived from the Statlog Shuttle classification dataset and contains
nine numerical features. The original training and test sets were combined
before constructing the outlier-detection benchmark. Samples from the five
smallest classes, namely classes 2, 3, 5, 6, and 7, were grouped together
as outliers, while samples from class 1 were treated as inliers. Samples
from class 4 were discarded. The resulting dataset contains 49,097
observations, including 3,511 outliers (7.15\%).

\medskip
\noindent
\textit{Cardio.}
The Cardio dataset
(\url{http://odds.cs.stonybrook.edu/cardio-dataset/})
is derived from the UCI Cardiotocography dataset and consists of 21
numerical features extracted from fetal-heart-rate and uterine-contraction
measurements. The original observations were classified by expert
obstetricians into normal, suspect, and pathologic fetal states. For the
ODDS outlier-detection benchmark, the suspect class was discarded, the
1,655 normal observations were treated as inliers, and the 176 pathologic
observations were treated as outliers. The resulting dataset contains
1,831 observations with an outlier proportion of 9.61\%.

\medskip
\noindent
\textit{Pendigits.}
The Pendigits dataset
(\url{http://odds.cs.stonybrook.edu/pendigits-dataset/})
is derived from the UCI Pen-Based Recognition of Handwritten Digits
dataset. Each observation is represented by 16 numerical features
corresponding to eight sampled two-dimensional coordinates along a
handwritten pen trajectory. The original dataset contains ten digit
classes, from 0 to 9. In the ODDS benchmark, digit 0 was designated as
the outlier class and downsampled to 156 observations, whereas the 6,714
observations corresponding to digits 1--9 were treated as inliers. The
resulting dataset contains 6,870 observations with an outlier proportion
of 2.27\%.

\medskip
\noindent
\textit{Common preprocessing and data splitting.}
For each dataset, every feature was transformed to the interval $[0,1]$
using feature-wise minimum and maximum values. The observations and their labels were then jointly permuted
using NumPy random seed 0. After permutation, the final 10\% of the
observations were retained as the test set. Of the remaining observations,
the final 10\% were used for validation and the rest were used for
training. This produces an approximately 81/9/10
training/validation/test split. The resulting split sizes are reported
in Table~\ref{tab:odds-data}. Class labels were not
provided as inputs to any of the density-estimation models.

\begin{table}[htbp]
\centering
\caption{Basic descriptions of the three datasets used for outlier detection in our study.}
\label{tab:odds-data}
\begin{tabular}{lccccccc}
\toprule
 & & & & & \multicolumn{3}{c}{\# examples} \\
\cmidrule(lr){6-8}
Dataset & Dim($z$) & Dim($x$) & Outliers (\%) & Test outliers ($k$) & Train & Validation & Test \\
\midrule
Shuttle   & 3 & 9  & 7.15 & 365 & 39770 & 4418 & 4909 \\
Cardio    & 5 & 21 & 9.61 & 15  & 1484  & 164  & 183  \\
Pendigits & 5 & 16 & 2.27 & 13  & 5565  & 618  & 687  \\
\bottomrule
\end{tabular}

\vspace{2pt}
{\footnotesize Outliers (\%) is the outlier rate of the full dataset; Test outliers ($k$)
is the number of outliers in the test split, which sets $k$ in precision@$k$.
Dim($z$) denotes the latent dimension used by Roundtrip. For BayesNDE, we set
Dim($z$) = Dim($x$), corresponding to 9, 21, and 16 for Shuttle, Cardio, and
Pendigits, respectively.}
\end{table}

\textbf{S3. Details of data preprocessing for the Real word
datasets used in our study.}

\medskip
\noindent\textit{BANK.} The BANK dataset relates to a marketing campaign of a
Portuguese banking institution, where the goal is to predict whether the client
will subscribe a deposit. Label encoding was used for discrete features in the
raw data with values between 0 and \texttt{n\_classes}. Then a uniform noise of
$(-0.2, 0.2)$ was added to each feature. At last, the raw data was applied a
feature scaling through a min--max normalization and randomly split into 90\%
training set and 10\% test. Note that for neural density estimators, 10\% of the
training set is kept for validation. BANK is used for unconditional density
estimation.

\medskip
\noindent\textit{ParkTele.} The Parkinsons Telemonitoring dataset (UCI archive
189) contains biomedical voice measurements from 42 patients with early-stage
Parkinson's disease, recruited for a six-month trial of a telemonitoring device.
Each example comprises 16 dysphonia measures derived from a sustained phonation:
five jitter variants, six shimmer variants, the noise-to-harmonics and
harmonics-to-noise ratios, and the nonlinear measures RPDE, DFA and PPE. Two of
the 16 were removed because they are definitional multiples of other columns:
\texttt{Jitter:DDP} equals $3\times$\texttt{Jitter:RAP} and
\texttt{Shimmer:DDA} equals $3\times$\texttt{Shimmer:APQ3} (the observed ratios
lie in $[2.9697, 3.0294]$ and $[2.9942, 3.0062]$, departing from exactly three
only through the five-decimal rounding of the published file). Retaining them
makes the design matrix numerically singular, so that a continuous density on
$\mathbb{R}^{16}$ is not well defined. The remaining 14 features were scaled by a
per-feature min--max normalization, and the same split as the BANK dataset was
used. The subject identifier, age, sex, test time and the two UPDRS scores were
discarded. ParkTele is used for unconditional density estimation. We note that
the 5{,}875 recordings come from 42 subjects, so the random split places
recordings of the same subject in both the training and the test set; all methods
share the identical split.

\medskip
\noindent\textit{Pendigits10.} The Pen-Based Recognition of Handwritten Digits
dataset (UCI archive 81) contains pen trajectories collected from 44 writers on a
pressure-sensitive tablet. Each digit is resampled to eight equally spaced points
along the trajectory, giving 16 integer coordinates in $[0, 100]$, and is
labelled with one of the ten digit classes. Because a continuous density is not
defined on an integer lattice, a uniform noise of $(0, 1)$ was added to each
coordinate with a fixed random seed, so that all methods see identical inputs;
with unit lattice spacing, the expected log-density under uniform dequantization provides a lower bound on the discrete log-probability mass by Jensen's inequality.

\medskip
\noindent\textit{Vehicle.} The Statlog Vehicle Silhouettes dataset (UCI archive
149) describes four vehicle types (bus, Opel, Saab, van) by 18 integer-valued
shape descriptors extracted from silhouette images, including compactness,
circularity, radius ratio, the axis-aligned and principal-axis moments, scatter
ratio, elongatedness, hollows ratio and the skewness and kurtosis about the major
and minor axes. A uniform noise of $(0, 1)$ was added to each feature for the
same reason as in Pendigits10. 

\medskip
\noindent\textit{EEGEye.} The EEG Eye State dataset (UCI archive 264) records one
continuous 117-second measurement from an Emotiv EEG headset, giving 14 electrode
channels per time point, with the eye state (open or closed) determined from a
simultaneous video recording. Four of the 14{,}980 examples are documented
measurement artifacts in which individual channels reach values above
$7\times10^{5}$ against a median near $4.3\times10^{3}$; these were removed using
the criterion that the robust $z$-score $|x - \mathrm{median}| / \mathrm{MAD}$
exceeds 100 in any channel, leaving 14{,}976 examples. 

\medskip
\noindent\textit{MNIST.} MNIST (http://yann.lecun.com/exdb/mnist/) contains
$28\times28$ grayscale images of handwritten digits. The images were flattened
into 784-dimensional vectors and scaled to $[0, 1]$.

\medskip
\noindent The descriptions of the five tabular datasets and the image dataset
(MNIST), including feature dimension and sample size, are summarized in
Table~\ref{tab:dataset_description}.

\begin{table}[H]
\caption{Basic descriptions of the 6 datasets for density estimation used in our study.
BANK and ParkTele are used for unconditional density estimation; 
Pendigits10, Vehicle and EEGEye for conditional density estimation, where
$\dim(\vx)$ counts only the continuous features and the class label is the
conditioning variable.}
\label{tab:dataset_description}
\begin{center}

\begin{tabular}[H]{llccrrr}
\hline
\multirow{2}{*}{Dataset} & \multirow{2}{*}{Domain} & \multirow{2}{*}{Dim($\vz$)} & \multirow{2}{*}{Dim($\vx$)} & \multicolumn{3}{c}{\# of examples} \\
\cline{5-7}
 & & & & Train & Validation & Test \\
\hline
Vehicle      & Transportation & 6   & 18   & 684    & 77     & 85    \\
ParkTele     & Medicine       & 8   & 14   & 4{,}760  & 528    & 587   \\
Pendigits10  & Handwriting    & 8   & 16   & 8{,}902  & 990    & 1{,}100 \\
EEGEye       & Neuroscience   & 5   & 14   & 12{,}130 & 1{,}348  & 1{,}498 \\
BANK         & Finance        & 8   & 17   & 36{,}621 & 4{,}069  & 4{,}521 \\
MNIST        & Image          & 10 & 784  & 50{,}000 & 10{,}000 & 10{,}000 \\
\hline
\end{tabular}
\end{center}
\end{table}

\begin{table*}[t]
\centering
\caption{Network architecture used by BayesNDE for conditional MNIST image
generation and density estimation. Images are scaled to $[0,1]^{784}$, the
class label $y\in\{0,1\}^{10}$ is one-hot encoded, and the latent variable is
$z\in\mathbb{R}^{10}$. For convolutional layers, $k$, $s$, and $c$ denote
kernel size, stride, and number of output channels, respectively. All
convolutions use same padding. BN denotes batch normalization and LReLU has
negative slope $0.2$. The decoder has two output heads that parameterize a
Gaussian distribution in logit space.}
\label{tab:mnist-bayesnde-architecture}
\renewcommand{\arraystretch}{1.14}
\setlength{\tabcolsep}{5pt}
\small
\begin{tabularx}{\textwidth}{@{}>{\raggedright\arraybackslash}X >{\raggedright\arraybackslash}X@{}}
\toprule
\multicolumn{1}{c}{\textbf{Conditional decoder $G_{\vtheta}$}}
& \multicolumn{1}{c}{\textbf{Image discriminator $D_x$}} \\
\midrule
Inputs: $z\in\mathbb{R}^{10}$ and $y\in\{0,1\}^{10}$
& Inputs: $x\in[0,1]^{784}$ and $y\in\{0,1\}^{10}$ \\
$\operatorname{Concat}(z,y)\in\mathbb{R}^{20}$
& Reshape $x$ to $28\times28\times1$; tile $y$ to $28\times28\times10$ \\
FC $6272$, LReLU
& $\operatorname{Concat}(x,\operatorname{tile}(y))$: $28\times28\times11$ \\
Reshape to $7\times7\times128$
& Conv ($k=5$, $s=2$, $c=32$), LReLU: $14\times14\times32$ \\
Tile $y$ to $7\times7\times10$ and concatenate: $7\times7\times138$
& Conv ($k=5$, $s=2$, $c=64$), LReLU: $7\times7\times64$ \\
Transposed conv ($k=3$, $s=2$, $c=64$), BN, LReLU: $14\times14\times64$
& Conv ($k=3$, $s=2$, $c=128$), LReLU: $4\times4\times128$ \\
Transposed conv ($k=3$, $s=2$, $c=32$), BN, LReLU: $28\times28\times32$
& Flatten to $2048$; concatenate $y$: $2058$ \\
Conv ($k=3$, $s=1$, $c=32$), BN, LReLU
& FC $128$, LReLU \\
Two parallel Conv heads ($k=1$, $s=1$, $c=1$)
& FC $1$ (linear score) \\
Flatten heads to $m_{\vtheta}(z,y),r_{\vtheta}(z,y)\in\mathbb{R}^{784}$
& \\
$v_{\vtheta}=v_{\min}+(v_{\max}-v_{\min})[1-\exp\{-\operatorname{softplus}(r_{\vtheta})\}]$
& \\
$v_{\min}=10^{-4}$ and $v_{\max}=4$
& \\
\midrule
\multicolumn{1}{c}{\textbf{Conditional encoder $E_{\phi}$}}
& \multicolumn{1}{c}{\textbf{Latent discriminator $D_z$}} \\
\midrule
Inputs: $x\in[0,1]^{784}$ and $y\in\{0,1\}^{10}$
& Input: $z\in\mathbb{R}^{10}$ \\
Reshape $x$ to $28\times28\times1$; tile $y$ to $28\times28\times10$
& FC $256$, BN, Tanh \\
$\operatorname{Concat}(x,\operatorname{tile}(y))$: $28\times28\times11$
& FC $256$, BN, Tanh \\
Conv ($k=3$, $s=2$, $c=32$), BN, LReLU: $14\times14\times32$
& FC $128$, BN, Tanh \\
Conv ($k=3$, $s=2$, $c=64$), BN, LReLU: $7\times7\times64$
& FC $64$, BN, Tanh \\
Conv ($k=3$, $s=1$, $c=128$), BN, LReLU: $7\times7\times128$
& FC $1$ (linear score) \\
Flatten to $6272$; concatenate $y$: $6282$
& \\
FC $256$, LReLU
& \\
FC $10$ (linear latent code)
& \\
\midrule
\multicolumn{2}{@{}p{0.97\textwidth}@{}}{
For image generation, a logit
$L\sim\mathcal{N}(m_{\vtheta},\operatorname{diag}(v_{\vtheta}))$ is drawn and
the displayed image is $\operatorname{sigmoid}(L)$.} \\
\bottomrule
\end{tabularx}
\end{table*}

\subsection{Conditional Density Estimation}
\label{app:conditional_density}

BayesNDE extends directly to conditional density estimation when an observed
conditioning variable $y$ is available. We use a conditional generative model
\[
p_{\vtheta}(x\mid z,y),
\qquad z\sim p_Z(z),
\]
where $y$ is provided as an additional input to the generative network.
For a fitted model $\hat{\vtheta}$, the conditional density of interest is
\[
p_{\hat{\vtheta}}(x\mid y)
=
\int
p_{\hat{\vtheta}}(x\mid z,y)p_Z(z)\,dz.
\]
Thus, defining
\[
\pi_{x,y}(z)
=
p_{\hat{\vtheta}}(x\mid z,y)p_Z(z),
\]
its normalizing constant is
\[
Z_{x\mid y}
=
\int \pi_{x,y}(z)\,dz
=
p_{\hat{\vtheta}}(x\mid y),
\]
and the corresponding latent posterior is
\[
p_{\hat{\vtheta}}(z\mid x,y)
=
\frac{\pi_{x,y}(z)}{Z_{x\mid y}}.
\]

Conditional density evaluation therefore follows exactly the same procedure
as in the unconditional case. HMC targets the unnormalized log posterior
$\log p_{\hat{\vtheta}}(x\mid z,y)+\log p_Z(z)$; posterior samples are split
into proposal-fitting and bridge subsets; an observation-specific proposal
$q_{x,y}(z)$ is fitted from the former; and bridge sampling estimates
$Z_{x\mid y}$. No modification of the bridge-sampling estimator is required
beyond replacing $\pi_x$ by $\pi_{x,y}$ throughout. For categorical conditioning variables used in our experiments, $y$ is
one-hot encoded and supplied to the conditional generative network.

\subsection{implementation}
\label{app:implementation}

\textbf{S1. Details of network regimes for the independent gussian
datasets used in our simulation study.}

We used two network regimes determined by the observed dimension $p$.  For
low-dimensional data ($p\leq 10$), the generator $G$ and encoder $E$ were
fully connected multilayer perceptrons with five hidden layers of width 256.
For higher-dimensional data ($p>10$), $G$ and $E$ were residual networks with
five width-256 hidden blocks.  The data-space and latent-space discriminators
used hidden widths $256$--$256$--$128$--$64$ in both regimes.  Hidden layers
used LeakyReLU activations.

\textbf{S2. EGM initialization}

We initialize the generator $G$, encoder $E$, and discriminators $D_x$ and
$D_z$ using encoding generative modeling (EGM). The EGM objective is
\begin{equation}
\begin{aligned}
\mathcal L_{\mathrm{EGM}}
={}&
\lambda_{\mathrm{adv},x}\mathcal L_{\mathrm{adv},x}
+\lambda_{\mathrm{adv},z}\mathcal L_{\mathrm{adv},z}
+\lambda_{\mathrm{rec},x}\mathcal L_{\mathrm{rec},x}
+\lambda_{\mathrm{rec},z}\mathcal L_{\mathrm{rec},z}
\\
&+
\lambda_{\mathrm{corr}}\mathcal L_{\mathrm{corr}}
+\lambda_{\mathrm{var}}\mathcal L_{\mathrm{logvar}}
+\lambda_{\mathrm{MMD,marg}}\mathcal L_{\mathrm{MMD,marg}}
\\
&+
\lambda_{\mathrm{MMD,joint}}\mathcal L_{\mathrm{MMD,joint}}
+\lambda_{\mathrm{SW}}\mathcal L_{\mathrm{SW}},
\end{aligned}
\label{eq:appendix_egm_general}
\end{equation}

\paragraph{Adversarial losses.}
The discriminators minimize the least-squares objective
\begin{equation}
\begin{aligned}
\mathcal L_{D}
={}&\tfrac12\Big\{\mathbb E_{\vx\sim P^*}\big[(0.9-D_x(\vx))^2\big]
      +\mathbb E_{\vz\sim P_Z}\big[(0.1-D_x(G(\vz)))^2\big]\Big\}\\
&+\tfrac12\Big\{\mathbb E_{\vz\sim P_Z}\big[(0.9-D_z(\vz))^2\big]
      +\mathbb E_{\vx\sim P^*}\big[(0.1-D_z(E(\vx)))^2\big]\Big\},
\end{aligned}
\end{equation}
while $G$ and $E$ minimize
\begin{equation}
\mathcal L_{\mathrm{adv},x}=\mathbb E_{\vz\sim P_Z}\big[(0.9-D_x(G(\vz)))^2\big],
\qquad
\mathcal L_{\mathrm{adv},z}=\mathbb E_{\vx\sim P^*}\big[(0.9-D_z(E(\vx)))^2\big].
\end{equation}
Discriminator and generator/encoder updates alternate one-to-one.

\paragraph{Reconstruction losses.}
\begin{equation}
\mathcal L_{\mathrm{rec},x}
=\mathbb E_{\vx\sim P^*}\Big[\tfrac1p\big\|\vx-G(E(\vx))\big\|_2^2\Big],
\qquad
\mathcal L_{\mathrm{rec},z}
=\mathbb E_{\vz\sim P_Z}\Big[\tfrac1p\big\|\vz-E(G(\vz))\big\|_2^2\Big].
\end{equation}

\paragraph{Correlation matching.}
The correlation matching is
\begin{equation}
\mathcal L_{\mathrm{corr}}
=
\frac{1}{p^2}
\sum_{i=1}^{p}
\sum_{j=1}^{p}
\left[
\operatorname{Corr}(\vx_i',\vx_j')
-
\operatorname{Corr}(\vx_i,\vx_j)
\right]^2,
\qquad
\vx'=G(E(\vx)).
\label{eq:appendix_corr}
\end{equation}
\paragraph{Marginal MMD.}
With the one-dimensional Gaussian kernel $k_h(u,v)=\exp\{-(u-v)^2/(2h^2)\}$ and
bandwidths $\mathcal H=\{0.05,0.1,0.2,0.5,1.0\}$,
\begin{equation}
\mathcal L_{\mathrm{MMD,marg}}
=\frac1{|\mathcal H|}\sum_{h\in\mathcal H}\frac1p\sum_{j=1}^p
\frac1{n^2}\sum_{a,b=1}^n
\Big[k_h(x_{aj},x_{bj})+k_h(\tilde x_{aj},\tilde x_{bj})-2k_h(x_{aj},\tilde x_{bj})\Big].
\end{equation}

\paragraph{Joint MMD.}
Both samples are first standardized coordinatewise by the pooled minibatch mean
$\boldsymbol m$ and variance $\boldsymbol v$ (treated as constants),
$\bar\vx=(\vx-\boldsymbol m)/\sqrt{\boldsymbol v+10^{-6}}$, and likewise $\bar{\tilde\vx}$.
With $\kappa_s(\vu,\vv)=\exp\{-\|\vu-\vv\|_2^2/(2sp)\}$ and $s\in\mathcal S=\{0.5,1,2\}$,
\begin{equation}
\mathcal L_{\mathrm{MMD,joint}}
=\max\Big\{0,\;\frac1{|\mathcal S|}\sum_{s\in\mathcal S}
\frac1{n^2}\sum_{a,b=1}^n
\big[\kappa_s(\bar\vx_a,\bar\vx_b)+\kappa_s(\bar{\tilde\vx}_a,\bar{\tilde\vx}_b)
-2\kappa_s(\bar\vx_a,\bar{\tilde\vx}_b)\big]\Big\}.
\end{equation}

\paragraph{Sliced Wasserstein.}
Using the same standardization, let
$\Theta=[\ve_1,\dots,\ve_p,\boldsymbol\theta_1,\dots,\boldsymbol\theta_{L-p}]$
with $L=64$, where the $\ve_j$ are coordinate axes and the $\boldsymbol\theta_l$ are
fixed random unit vectors. For each direction $\boldsymbol\omega_l$, let
$u^{(l)}_{(1)}\le\dots\le u^{(l)}_{(n)}$ and $v^{(l)}_{(1)}\le\dots\le v^{(l)}_{(n)}$ be the
sorted projections $\boldsymbol\omega_l^\top\bar\vx_a$ and
$\boldsymbol\omega_l^\top\bar{\tilde\vx}_a$. Then
\begin{equation}
\mathcal L_{\mathrm{SW}}
=\frac1{nL}\sum_{l=1}^{L}\sum_{i=1}^{n}\big(u^{(l)}_{(i)}-v^{(l)}_{(i)}\big)^2.
\end{equation}

The conditional-variance stabilization term is
\begin{equation}
\mathcal L_{\mathrm{logvar}}
=
\mathbb E_{\vz\sim P_Z}
\left[
\frac{1}{p}
\sum_{j=1}^{p}
\left\{
\log\!\left(\sigma_j^2(\vz)+10^{-8}\right)
-\log(0.01)
\right\}^{2}
\right].
\label{eq:appendix_logvar}
\end{equation}

Throughout, the generator is stochastic,
$G(\vz)=\boldsymbol\mu(\vz)+\boldsymbol\sigma(\vz)\odot\boldsymbol\epsilon$,
$\boldsymbol\epsilon\sim N(\mathbf 0,I_p)$, with
$\sigma_j^2(\vz)=\operatorname{softplus}(\cdot)+10^{-6}$, and $P_Z=N(\mathbf 0,I_p)$.
All expectations are estimated on minibatches of size $n=256$; we write
$\{\vx_a\}_{a=1}^n\sim P^*$ and $\{\tilde\vx_a=G(\vz_a)\}_{a=1}^n$, $\vz_a\sim P_Z$.
For the high-dimensional independent-GMM experiments,
$\lambda_{\mathrm{adv},x}=\lambda_{\mathrm{adv},z}=1$,
$\lambda_{\mathrm{rec},x}=3$, $\lambda_{\mathrm{rec},z}=1$, $\lambda_{\mathrm{corr}}=0.3$,
$\lambda_{\mathrm{MMD,marg}}=1000$, $\lambda_{\mathrm{MMD,joint}}=100$,
$\lambda_{\mathrm{SW}}=300$, and $\lambda_{\mathrm{var}}=0$. For the low-dimensional independent-GMM experiments, $G$ and $E$ are
five-layer width-256 multilayer perceptrons and the EGM learning rate is
$10^{-3}$. The final configurations use
$\lambda_{\mathrm{adv},x}
=\lambda_{\mathrm{adv},z}
=\lambda_{\mathrm{rec},z}=1$,
$\lambda_{\mathrm{corr}}=0.3$, and
$\lambda_{\mathrm{var}}=0.01$.
The data-space reconstruction weight is
$\lambda_{\mathrm{rec},x}=1$ for the 2D settings and
$\lambda_{\mathrm{rec},x}=3$ for the 5D and 10D settings.
The marginal-MMD penalty is disabled except in the 5D configuration,
where $\lambda_{\mathrm{MMD,marg}}=100$.

For the five real datasets, training uses Adam at learning
rate $10^{-3}$ with batch size $256$, and the shared weights are
$\lambda_{\mathrm{adv},x}=\lambda_{\mathrm{adv},z}=1$,
$\lambda_{\mathrm{rec},x}=3$, $\lambda_{\mathrm{rec},z}=1$, an overall cycle
weight of $5$, a KL weight of $5\times10^{-5}$, and one discriminator update per
generator update. The variance target is $0.01$ with numerical stabilization
$10^{-8}$, and the joint-MMD and sliced-Wasserstein penalties are disabled.
The remaining regularization weights are fixed per dataset to the values in
Table~\ref{tab:real-weights}. The EGM step is chosen by the pooled generation rank mentioned in S4, on validation samples only.

\begin{table}[htbp]
  \centering
  \caption{Regularization weights for the real datasets, fixed on validation
  data before any test point is scored.}
  \label{tab:real-weights}
  \begin{tabular}{l c ccc c ccc}
    \toprule
    & & \multicolumn{3}{c}{EGM} & & \multicolumn{3}{c}{Iterative} \\
    \cmidrule(lr){3-5}\cmidrule(lr){7-9}
    Dataset & $z$ & $\alpha$ & $\lambda_{\mathrm{var}}$ & $\lambda_{\mathrm{corr}}$
    & & $\lambda^{\mathrm{iter}}_{\mathrm{var}}$ & $\lambda^{\mathrm{iter}}_{\mathrm{corr}}$ & $\lambda_{\mathrm{PP}}$ \\
    \midrule
    BANK        & 8 & 0.1 & 0.01 & 0.3 & & 0    & 0.3 & 1000 \\
    ParkTele    &  8 & 0   & 0    & 0.3 & & 0    & 0.3 & 1000 \\
    Pendigits10 &  8 & 0   & 0    & 0.3 & & 0.01 & 0.3 & 0    \\
    Vehicle     &  6 & 0   & 0    & 0   & & 0    & 0   & 0    \\
    EEGEye      &  5 & 0.1 & 0.01 & 0   & & 0.01 & 0   & 0    \\
    \bottomrule
  \end{tabular}
\end{table}

\textbf{S3. Alternating latent and generator updates}

After EGM initialization, the sample-specific latent variables and
generative parameters are refined by alternating stochastic updates.
For a mini-batch $\mathcal B$, the latent representations are updated by
minimizing the negative log posterior
\begin{equation}
\mathcal J_z
=
\frac{1}{|\mathcal B|}
\sum_{i\in\mathcal B}
\left[
-\log p_{\vtheta}(\vx_i\mid\vz_i)
+\frac{1}{2}\|\vz_i\|_2^2
\right].
\label{eq:appendix_latent_update}
\end{equation}
The latent variables are optimized using Adam with learning rate $0.005$.

Conditional on the updated latent variables, the generator is optimized using
\begin{equation}
\mathcal J_{iter}
=
-\frac{1}{|\mathcal B|}
\sum_{i\in\mathcal B}
\log p_{\vtheta}(\vx_i\mid\vz_i)
+
\lambda_{\mathrm{PP}}\mathcal L_{\mathrm{MMD,PP}}
+
\lambda_{\mathrm{corr}}^{\mathrm{iter}}
\mathcal L_{\mathrm{corr}}
+
\lambda_{\mathrm{var}}^{\mathrm{iter}}
\mathcal L_{\mathrm{logvar}}.
\label{eq:appendix_generator_update}
\end{equation}
Here,
$\mathcal L_{\mathrm{MMD,PP}}$
is a prior-predictive multi-scale MMD penalty comparing the observed
mini-batch with samples generated from fresh
$\vz\sim\mathcal N(0,I)$.
Terms not enabled in a given configuration have zero weight. For the involute distribution, the generator is optimized using AdamW with a constant learning rate of $10^{-6}$.
For the independent Gaussian mixture, it is optimized using Adam with learning rate $0.005$, kept constant for $p\ge 15$ and decayed as $0.005\,(1+t/10)^{-0.6}$ at epoch $t$ for $p\le 10$.

For the real datasets, the iterative regularization weights are likewise fixed
per dataset (Table~\ref{tab:real-weights}). Every run starts from the EGM
checkpoint frozen in the preceding stage, uses Adam at learning rate $0.005$
with batch size $256$, and is trained for $2{,}000$ epochs with a checkpoint
every $50$. The reported epoch is the one maximizing the validation decoder log-likelihood, frozen before any test point is scored.

\textbf{S4. Step/Epoch selections}

All architecture, regularization, EGM-step, and iterative-epoch decisions were
made without access to the test observations.  Selection used fixed validation
observations and fixed random seeds.

For simulation experiments, both the EGM checkpoint and the iterative-updating checkpoint were selected with
the same criterion, computed from held-out observations and generated samples.
For a checkpoint grid $\mathcal{E}$ and the diagnostic set $\mathcal{M}$, we minimized the
equal-weight average percentile rank
\begin{equation}
 \widehat e
 =\arg\min_{e\in\mathcal E}
 \frac{1}{|\mathcal M|}\sum_{m\in\mathcal M}
 r_m(e),
 \qquad
 r_m(e)=\frac{\bigl|\{e'\in\mathcal E: d_m(e')<d_m(e)\}\bigr|}{|\mathcal E|-1},
\end{equation}
where $d_m(e)$ is diagnostic $m$ at checkpoint $e$, oriented so that smaller is better.
The diagnostics are RBF MMD, mean symmetric KL, sliced Wasserstein distance, mean
marginal Wasserstein distance, mean KS statistic, dimension-normalized correlation
error, one minus the local inverse Simpson's index \citep[LISI;][]{korsunsky2019fast}, computed on the union of held-out and generated samples labeled by their source and rescaled to $[0,1]$ so that $1$ indicates perfect mixing, and one minus the fraction of adequately reproduced
marginal standard deviations.  This criterion requires neither ground-truth density
nor known mixture centres.  For the conditional datasets every diagnostic was computed
per class and averaged with weights proportional to the observed class frequencies. The test split was evaluated only after all choices had been frozen. For Real UCI datasets,  The EGM configuration and step are chosen by the validation generation-diagnostic rank sum; the iterative configuration and epoch by the validation decoder log-likelihood.

\textbf{S5. Posterior simulation and bridge estimation}

For each evaluation point, HMC was initialized near $E(\vx)$ using four chains,
step size $0.003$, and 10 leapfrog steps. The independent-GMM configuration retained 1600 posterior draws after 800
burn-in transitions.  Half of the retained posterior draws were used to fit the
proposal and the other half were reserved for bridge estimation. The involute and UCI evaluations used four HMC chains, an initial step size of $0.003$, 10 leapfrog steps, 800 burn-in transitions per chain, and 1600 retained draws in total, divided equally between proposal fitting and bridge estimation. For MNIST, we used four chains with the same initial step size and number of leapfrog steps, 400 burn-in transitions per chain, 800 retained draws in total (400 for proposal fitting and 400 for bridge estimation), and 5000 proposal draws. In all experiments, chains were initialized from the encoder with independent Gaussian perturbations of standard deviation $0.01$, and the step size was adapted during the first 80\% of burn-in toward a target acceptance probability of $0.75$.

The observation-specific proposal was constructed by fitting a
five-component, full-covariance Gaussian mixture to the proposal-fitting HMC
draws.  Its fitted weights and means were retained, and its Gaussian kernels
were replaced by Student-$t$ kernels with three degrees of freedom.  Let
$\widehat\mSigma_k^{\mathrm{GMM}}$ denote the covariance returned by the GMM
fit.  The GMM was fitted with diagonal covariance regularizer
$\delta=10^{-3}$.  Before Cholesky factorization, the implementation further
formed $ \mL_k\mL_k^\top
 =\widehat\mSigma_k^{\mathrm{GMM}}+\eta_k\mI,
 \eta_k\geq\delta$, starting with $\eta_k=\delta$ and increasing it by factors of ten only if the
factorization failed.  Thus the scale used by the proposal includes both the
GMM covariance regularization and the additional numerical jitter; it is not
the unregularized within-component sample covariance.

The proposal was augmented by a standard-normal defensive
component of weight $0.05$, and 20,000 proposal samples were drawn.  The bridge
fixed point was initialized by the importance-sampling estimate and iterated on
the log scale until numerical convergence, with tolerance $10^{-5}$ and at most
1000 iterations.

The bridge fractions were formed using a quantity \citep{vehtari2021rank} denoted by
$M_{\mathrm{eff}}$.  In the implementation, draws from all HMC chains
are pooled, randomly permuted, and divided into proposal-fitting and bridge
subsets.  The bridge subset is treated as a single sequence, an effective
sample size is computed separately for each latent coordinate, and the minimum
over coordinates is used as $M_{\mathrm{eff}}$, truncated to the interval
$[1,M_b]$.  The resulting fractions are
$s_p=\frac{M_{\mathrm{eff}}}{M_{\mathrm{eff}}+S}, s_q=\frac{S}{M_{\mathrm{eff}}+S}.$

Because the permutation does not preserve within-chain temporal order, this
quantity should be understood as a heuristic weighting parameter computed from the pooled and shuffled bridge subset, rather than a conventional multi-chain Markov-chain ESS.
All posterior kernels, proposal densities, importance weights, and bridge
updates were evaluated on the log scale.

\textbf{S6. Bridge Sampling Ablation and Running Time}
\label{app:ISBS}

In table \ref{tab:ablation}, Both estimators are evaluated on the same $500$ held-out
test points with $M$ = 400 and the same defensive mixture proposal, so the training recipe is held
fixed and the only difference is how the normalising constant
$Z = p_\vtheta(x_{\mathrm{obs}})$ is estimated from the posterior draws. For picture ~\ref{fig:isbs-budget} (a), only the HMC
budget $M$ varies; the number of proposal draws is held at the production value
$S = 20000$ throughout. Each point is the mean over three independent Monte Carlo
replicates of the estimator, with the model, data, test points and budgets held
identical across replicates. Beyond $M = 400$ the rank correlation of bridge sampling
saturates, and the remaining differences are within about one standard error across
replicates, as the estimate converges to the fitted model's density $p_{\hat{\theta}}$
rather than to the true density.

\begin{figure}[htbp]
  \centering
  \includegraphics[width=\linewidth]{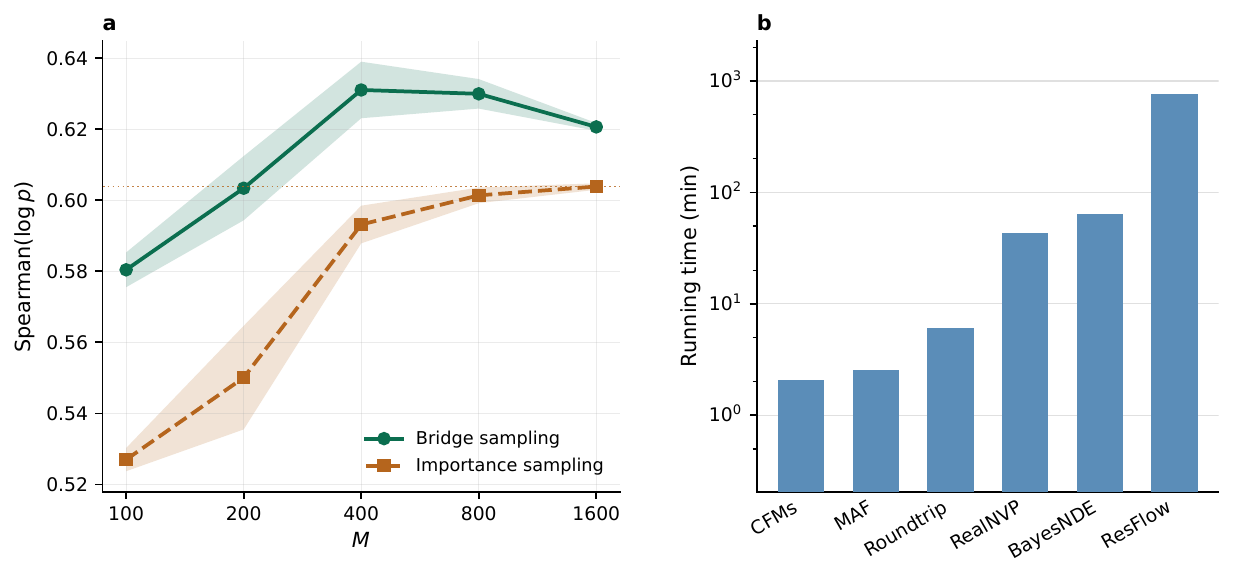}
  \caption{\textbf{(a)} Sampling-budget curve on the $15$-dimensional independent GMM
  benchmark: rank correlation between $\log \hat p$ and the ground-truth log-density
  against the posterior budget $M$. Shaded bands are $\pm 1$ standard error over three
  estimator replicates, and the dotted horizontal line marks the highest accuracy
  importance sampling attains at any budget, which bridge sampling already exceeds at
  $M = 400$. \textbf{(b)} Running time on BANK: the time to fit each estimator plus
  the time the fitted estimator takes to score $500$ test points, drawn uniformly at random without
  replacement from the $4521$-row test split. All six methods use all $17$ columns and
  were timed on one node of the same type (Intel Xeon 6442Y, four cores, a single NVIDIA
  A40). The vertical axis is logarithmic.}
  \label{fig:isbs-budget}
\end{figure}

\begin{table}[htbp]
  \centering
  \caption{Sampling-budget sweep at $p = 15$, mean $\pm$ standard error over
  three estimator replicates on $500$ test points. $\Delta$ is bridge minus
  importance sampling.}
  \label{tab:isbs-budget}
  \begin{tabular}{r ccc}
    \toprule
    & \multicolumn{2}{c}{Spearman($\log p$)} & \\
    \cmidrule(lr){2-3}
    $M$ & Bridge & IS & $\Delta$ \\
    \midrule
    $100$  & $\mathbf{0.5804} \pm 0.0049$ & $0.5270 \pm 0.0033$ & $+0.0535$ \\
    $200$  & $\mathbf{0.6034} \pm 0.0090$ & $0.5501 \pm 0.0146$ & $+0.0534$ \\
    $400$  & $\mathbf{0.6311} \pm 0.0080$ & $0.5932 \pm 0.0053$ & $+0.0379$ \\
    $800$  & $\mathbf{0.6300} \pm 0.0042$ & $0.6014 \pm 0.0022$ & $+0.0286$ \\
    $1600$ & $\mathbf{0.6207} \pm 0.0011$ & $0.6039 \pm 0.0009$ & $+0.0168$ \\
    \bottomrule
  \end{tabular}
\end{table}

Panel~(b) of Figure~\ref{fig:isbs-budget} turns to cost, reporting for each method the
time to fit the model plus the time to score the $500$ test points. The axis is
logarithmic because the totals span two and a half orders of magnitude, from $2.1$
minutes for CFMs to $756$ minutes for Resflow. Testing is timed with the whole $500 \times 17$ test array handed to each method at once,
on one card. The flows score it in a single forward pass and CFMs in a single solve of its
density ODE. Roundtrip draws its $500 \times S$ importance samples as one array, and
BayesNDE advances the $500 \times 4$ HMC chains in lockstep, returning a
$500 \times M \times 8$ array of posterior draws, then carries its proposal draws and
bridge iterations as $500 \times S$ arrays.

The two Monte Carlo estimators each have a sampling budget to choose, and one rule picks
it for both: the cheapest budget whose per-point estimates still rank the test set in
agreement with that method's own converged estimate at Spearman $\ge 0.85$. Roundtrip's rank
agreement with its own converged estimate is $0.700$ at $2\,500$ samples and $0.758$ at
$10\,000$, so it is timed at $40\,000$; BayesNDE reaches $0.850$ at $M = 100$ and is
timed there. At either budget testing is a small part of the total --- $8.6$ seconds for
Roundtrip and $20.0$ seconds for BayesNDE, against $6.0$ and $63.8$ minutes of training. The training part covers model fitting only. BayesNDE runs in $64.2$ minutes in total. That is longer than CFMs, MAF and Roundtrip
($2.1$, $2.6$ and $6.1$ minutes) and about one and a half times RealNVP's $43.6$ minutes, and
twelve times shorter than Resflow's $756$ minutes. On the same data, BayesNDE's mean test log-likelihood exceeds Roundtrip's by $5.7$ nats.

\subsection{Baseline methods}
\label{app:baseline impl}
We compare BayesNDE against six density-estimation baselines spanning
autoregressive flows, coupling flows, free-form residual flows, continuous-time
flow matching, and the GAN-based estimator that is closest to our own setting.
Every method is trained on exactly the same train/validation split, is evaluated
on a byte-identical set of held-out points (we assert equality of the SHA-256
digests of the evaluation matrices across all methods). All baselines run their original authors' code, taken unmodified from the public releases; only the data interface and the input width are adapted.

\paragraph{Conditioning.}  BayesNDE fits a single model
$p_{\vtheta}(x \mid z, y)$ in which the one-hot label is concatenated to the
input of every residual block of the generator, of the encoder, and of both
discriminators; one set of parameters is shared across all labels and is fitted
on the full training split.  The baselines are unconditional density estimators
whose public implementations have no label pathway, so for each of them we fit
one independent unconditional expert per observed label, seeded at
$42 + y$, and report the expert's density as $p(x \mid y)$.  Neither
construction assigns a density to the discrete label itself, and both are
evaluated on the same points.

\paragraph{MADE.}  Masked Autoencoder for Distribution Estimation provides the
autoregressive, single-pass reference point.  We call the official
\texttt{maf-master} entry point \texttt{experiments.train\_made([100,100],
\textquotesingle relu\textquotesingle, \textquotesingle sequential\textquotesingle)},
which builds \texttt{ml.models.mades.GaussianMade}: two hidden layers of $100$
ReLU units, sequential input ordering, no batch normalization, and one Gaussian
conditional per coordinate.  Training uses the authors' \texttt{ml.trainers.SGD}
with Adam at step size $10^{-3}$, weight decay $10^{-6}$, minibatch $100$,
validation monitored every epoch, and early stopping with patience $30$.

\paragraph{MAF.}  Masked Autoregressive Flow stacks MADE blocks to obtain a
deeper autoregressive density.  We call
\texttt{experiments.train\_maf([100,100], \textquotesingle relu\textquotesingle,
10, \textquotesingle sequential\textquotesingle)}, giving
\texttt{ml.models.mafs.MaskedAutoregressiveFlow} with $10$ MADE blocks, each
with two hidden layers of $100$ ReLU units and sequential ordering, and batch
normalization between consecutive blocks.  The trainer is the same as for MADE
with the authors' flow step size $10^{-4}$.

\paragraph{Real NVP.}  Real NVP is the affine-coupling counterpart, whose Jacobian
is triangular by construction rather than by masking order.  We call
\texttt{experiments.train\_Real NVP([100,100], \textquotesingle tanh\textquotesingle,
\textquotesingle relu\textquotesingle, 10)}, giving \texttt{ml.models.nvps.Real NVP}
with $10$ coupling layers, scale and translation networks of two hidden layers
of $100$ units with \texttt{tanh} and ReLU activations respectively, and batch
normalization.  Optimisation settings match MAF.

\paragraph{CFMs.}
Conditional Flow Matching (CFMs) learns a continuous-time velocity field by
regression against a conditional probability path, avoiding ODE simulation
during training \citep{lipman2023flow,tong2024improving}.
We use the TorchCFM implementation from the official
\texttt{conditional-flow-matching} repository of
\citet{tong2024improving}, including its \texttt{CFMLitModule},
conditional flow-matching objective, and \texttt{VelocityNet}
implementation.

\paragraph{Resflow.}  Residual Flows give an invertible free-form architecture
with an unbiased log-determinant estimator, and are the strongest baseline we
find in high dimension.  We use the official \texttt{residual-flows} layers with
$100$ \texttt{iResBlock}s.  Each block is a residual network of Swish
activations and spectrally normalised linear layers with Lipschitz coefficient
$0.9$, $5$ power iterations, and domain/codomain exponents $2.0$; following the
authors' \texttt{train\_toy.py} condition literally, the final layer is
zero-initialised only when its output width is $2$. The log-determinant uses the unbiased geometric Russian-roulette series
(\texttt{n\_dist=\textquotesingle geometric\textquotesingle}, one sample, no
exact trace).  We train with Adam at learning rate $10^{-3}$ and weight decay
$10^{-5}$, batch size $500$, for $50{,}000$ iterations, refreshing the Lipschitz
constants every $5$ training iterations and with $200$ iterations before each
evaluation, validating every $100$ iterations and retaining the
best-validation checkpoint.

\paragraph{Roundtrip.}
Roundtrip is the closest comparator to our setting: it places a latent
variable under a generative network and estimates pointwise density by
one-sided importance sampling.  We use the authors' original TensorFlow
density-estimation implementation and architecture: a generator with $10$
hidden layers of $512$ units, an encoder with $10$ hidden layers of $256$
units, a latent discriminator with two hidden layers of $128$ units, and a
data discriminator with four hidden layers of $256$ units.  Training uses
Adam with learning rate $2\times10^{-4}$, batch size $64$, cycle weights
$\alpha=\beta=10$, and an image pool of size $50$.  Models are trained for
at most $100$ epochs, with validation-based model selection beginning at
epoch $30$ and early-stopping patience $5$. Pointwise densities are estimated using the authors' one-sided importance
sampler with a degree-one Student-$t$ proposal, $40{,}000$ importance samples. Selections of standard deviation and proposal scale follows the original paper.

\paragraph{Execution environments.}  MADE, MAF and Real NVP are the original
Theano~1.0.5 implementations and are executed in a dedicated Python~3.10
environment through a subprocess worker; CFMs and Resflow run under
PyTorch~1.12 on a single GPU; Roundtrip and BayesNDE run under the project
TensorFlow environment.  Because the three environments cannot share a process,
we verified that the evaluation matrices they read are byte-identical by
comparing SHA-256 digests computed by independent, separately audited
implementations of the same hashing routine.

\end{document}